\documentclass{article}
\usepackage{ijcai26}

\usepackage{times}
\usepackage{soul}
\usepackage{url}
\usepackage[hidelinks]{hyperref}
\usepackage[utf8]{inputenc}
\usepackage[small]{caption}
\usepackage{graphicx}
\usepackage{amsmath}
\usepackage{amsthm}
\usepackage{booktabs}
\usepackage{algorithm}
\usepackage{algorithmic}
\usepackage[switch]{lineno}

\usepackage{amsmath}
\usepackage{amssymb} 
\usepackage{booktabs}
\usepackage{cuted}

\newtheorem{proposition}{Proposition}
\newtheorem{corollary}{Corollary}[proposition]

\newtheorem{conjecture}{Conjecture}

\title{A Fair Objective for Human-Empowerment-Preserving AI:\\ Desiderata, Design, and Likely Behavioral Consequences}
\makeatletter
\let\mytitle\@title
\makeatother

\author{
    \textsuperscript{\rm 1},
    \textsuperscript{\rm 2}
}

\iftrue
\author{
Jobst Heitzig$^1$
\and
Ram Potham$^2$\\
\affiliations
$^1$Potsdam Institute for Climate Impact Research, Potsdam, Germany\\
$^2$Independent\\
\emails
heitzig@pik-potsdam.de,
ram.potham@gmail.com
}
\fi

\DeclareMathOperator*{\E}{\mathbb{E}}

\DeclareMathOperator*{\argmax}{arg\,max}

\def\A{\mathcal{A}}  
\def\G{\mathcal{G}}  
\def\H{\mathcal{H}}  
\def\S{\mathcal{S}}  

\def\ld{\log_2}

\def\Pr{\mathrm{Pr}} 

\def\entropy{\mathbb{H}} 
\def\MI{\mathbb{I}} 
\def\DKL{D_{KL}} 

\def\klyubinE{\mathfrak{E}}

\def\trans{\Pr}  
\def\GAC#1#2{C_{#1}^{#2}}  
\def\IP#1{I_{#1}}  
\def\PP{P}  
\def\TP{T}  
\def\LP{L}  

\def\citet#1{\citeauthor{#1} [\citeyear{#1}]}

\usepackage{todonotes}

\usepackage{xcolor}
\usepackage{url}

\begin{document}

\maketitle

\begin{abstract}
This paper explores the idea of promoting well-being and safety in human-AI interactions by forcing AI agents explicitly to empower humans and to manage the power balance between humans and AI agents in a desirable way.
Using a principled, partially axiomatic approach based on desirable properties, we design a parametrizable and decomposable objective function for AI systems that represents an inequality- and risk-averse long-term aggregate of human power. 
It can take into account models of human bounded rationality and social norms, and crucially, considers a wide variety of possible human goals.
We prove how certain desiderata enforce particular functional forms and restrict parameter ranges.
We exemplify the consequences of softly maximizing this metric in several paradigmatic situations and describe what instrumental sub-goals it will likely imply.
\end{abstract}


\section{Introduction}

A duty to empower others, especially those with little power, can be defended on consequentialist \cite{sen2014development}, deontological \cite{hill2002human}, and virtue ethical \cite{nussbaum2019aristotelian} grounds.
At the same time, it has been argued that automated AI decision making can gradually {\em dis}empower humans
\cite{vallor2015moral,ferdman2025ai,kulveit2025position}.

Accordingly, some papers explored tasking AI agents explicitly with the empowerment of (individual) human users, using the channel capacity $\klyubinE$ between human actions and environmental states, called `empowerment' in \citet{klyubin2005empowerment}, as the key metric \cite{salge2017empowerment}.
However, that purely information-theoretic metric lacks a clear interpretation in terms of the capability to attain {\em goals,} which is arguably a key aspect of (em)power(ment) \cite{lukes2026power}.

\paragraph{Related work}
Several threads in AI safety address the question of what objective an AI agent should pursue when interacting with humans.
Assistance games \cite{hadfield2016cooperative} assume the agent is uncertain about the human's utility function and learns it through interaction, but require that a true utility function exists and is identifiable.
Impact measures aim to keep agents conservative by penalizing deviation from a baseline; attainable utility preservation \cite{turner2020conservative} and reachability analysis \cite{krakovna2018measuring} both quantify what an agent's actions foreclose, but do so from the {\em agent's} perspective rather than measuring {\em human} capability.
The `empowerment' literature \cite{klyubin2005empowerment,salge2017empowerment} proposes maximizing the information-theoretic channel capacity between a human's actions and resulting states, but does not account for bounded rationality, goal attainment, or aggregation across several humans.
Corrigibility \cite{potham2025corrigibility} addresses whether agents permit human correction, typically as a standalone desideratum rather than as an emergent consequence of a unified objective.
Our approach connects these threads: like assistance games it is grounded in human agency, like impact measures it cares about preserving options, like `empowerment' it uses an intrinsic structural metric, and like social choice theory it aggregates across individuals using axiomatic methods. The key difference is that we derive a single, axiomatically motivated objective from which safety properties such as corrigibility, norm-following, and commitment-making emerge rather than being imposed separately.

\paragraph{Contribution} Concretely, this paper extends the theoretical foundations of the empowerment-based approach to AI in three ways.
First, inspired by the ``capability approach'' to welfare \cite{sen2014development}, 
we develop an {\em alternative metric} of human power that is directly based on the capability to attain a wide range of possible goals. 
As our metric is based on some given model of human behavior, it can explicitly and transparently incorporate humans' knowledge about the actions of the AI agent, their expectations about others' behavior, e.g.\ due to social norms, and their bounded rationality. 

Second, using an approach guided by desiderata similar to decision theory, social choice theory, and welfare theory, we develop an objective function for an AI agent interacting with {\em populations} of humans, based on a fair aggregate of power across humans and along time.

Finally, we analyze the resulting system behavior in a set of stylized situations and argue that the empowerment-based approach gives the AI agent desirable incentives such as communicating well \cite{reddy2022first}, following orders, being corrigible \cite{potham2025corrigibility}, avoiding irreversible changes in the environment, protecting humans and itself from harm and disempowerment, acting ``appropriately'' by following relevant social norms \cite{leibo2024theory}, and allocating resources fairly and sustainably.

Though based on {\em possible} human goals, our approach avoids relying on the assumption that the AI system knows or is able to learn individual humans' {\em actual, current} goals, as it has been argued that human goals are changing and non-identifiable \cite{cao2021identifiability,banerjee2011poor} and their prediction unavoidably uncertain \cite{baker2011bayesian}.\footnote{%
    Some critics of a goal- or preference-based approach argue for a {\em values}-based approach instead \cite{lowe2025fullstack}, which however seems to require even stronger assumptions on the AI agent's semantic world understanding. A deep semantic understanding is also required in the `freedoms'-based conception of AI ethics in \citet{london2024beneficent}, and compiling its required list of `fundamental capabilities' is difficult \cite{robeyns2006capability}.}
    
Our metrics aim to avoid such semantic issues by relating human power to the capability to bring about just {\em any} possible conditions a human might happen to desire. 
We are only assuming the AI has (i) a model of how humans behave {\em conditional} on their goals, and (ii) a {\em structural} understanding of possibly dynamics, interactions, and transition probabilities.
This distinguishes our theory from approaches that form beliefs about humans' {\em actual} goals, such as ``assistance games'' \cite{hadfield2016cooperative}.

The working hypothesis that we would like to start exploring with our work is that using a completely goal-{\em agnostic} approach already suffices to specify policies for AI systems that make them not only safe but also useful. 

For theoretical convenience, we work in a {\em model-based planning} setting with a stochastic world model \cite{ha2018world,lecun2022path,zhu2024sora,feng2025survey,bengio2025superintelligent}.  
After developing our human power metric in Section \ref{sec:human_power_metric}, we analyze its likely behavioral consequences in Section \ref{sec:behavior}.
A companion paper to this theory paper will focus on scalable estimation of the metrics and empirical validation of the approach.
Section \ref{sec:conclusion} concludes.

\section{Measuring, Aggregating, and Softly Maximizing Human (Em)Power(ment)}
\label{sec:human_power_metric}

\begin{figure*}[ht]
    \centering
    \begin{tabular}{c}
        {\bf World model} (dynamics, {\em possible} goals) \\
        {\bf Human behavior model} (goal-{\em dependent})
    \end{tabular}
    $\to$ 
    \begin{tabular}{c}
        \bf Human power metrics \\ (individual / present aggr. /\\ / long-term aggregate)
    \end{tabular}
    $\to$ 
    \begin{tabular}{c}
        {\bf AI system's policy} (goal-{\em agnostic}), \\ softly maximizing long-term power
    \end{tabular} \\[-1mm]
    \raisebox{1.5ex}{$\nwarrow~~$}
    Observations 
    $~~\gets~~$
    Environment with humans
    $~~\gets~~$
    AI system's actions
    \raisebox{1.5ex}{$~~\swarrow$}
    \caption{Deriving fair empowerment-preserving AI policies (top), and their feedback on the underlying models (bottom).}
    \label{fig:main_figure}
\end{figure*}

\paragraph{Framework}
A \textbf{\textit{robot} \boldmath $r$} (the AI system) interacts with several \textbf{\textit{humans} \boldmath $h\in\H$}. 
Its \textbf{\textit{world model}} is a stochastic game form $\Gamma$ with states $s\in\S$.
Neither $r$ nor $h$ get extrinsic rewards.
The humans have many {\em possible} and potentially {\em changing} \textbf{\textit{goals} \boldmath $g_h\in\G_h$},
and $r$ pursues an intrinsic objective we are going to design.
Action sets $\A_r(s),\A_h(s)$ may be state-dependent.
Action profiles are written $a=(a_r,a_\H)=(a_r,a_h,a_{-h})$, where $a_\H\in\A_\H=\prod_{h\in\H}\A_h$.
The transition kernel is $\trans(s'|s,a)$. 
For simplicity, we assume the game form is finite and acyclic, with terminal state set $\S^\top$.

The robot neither knows nor forms beliefs about the {\em actual} goals of the humans. 
Instead, it has a model of human behavior encoded in some given \textbf{\textit{goal-conditioned human policy} \boldmath $\pi_\H(s,g)$} $\in\Delta(\A_\H)$, where $g\in\G=\prod_{h\in\H}\G_h$ is a goal profile.
Although $\pi_\H$ does not depend on $\pi_r$, human behavior at state $s$ can depend on the robot's {\em past} behavior because the latter will have influenced what state $s$ the world is in.\footnote{%
    We do {\em not} assume humans play a best response to the robot's {\em policy} (which is unobserved). 
    Instead, $\pi_\H$ contains a model of how humans {\em adapt} their behavior to {\em observed past} robot behavior.}

\paragraph{Task} 
We want to design an \textbf{\textit{objective function} \boldmath $\LP(s)$} that represents {\em \underline{l}ong-term aggregate human power} and fulfills certain desiderata,
and a corresponding \textbf{\textit{robot policy} \boldmath $\pi_r(s)$} that tends to steer the world into states $s$ with larger $\LP(s)$. 
We are deliberately not assuming any form of rational choice, optimal control, or welfare theory upfront but rather aim to derive all formulas from first principles.
Still, many of the desiderata and resulting equations will look familiar. 

The first such principles are aggregation, individualism, and goal-orientedness: 
We want $\LP(s_t)$ (where $t$ is time) to be some aggregate of a metric $\TP(s_{\ge t})$ of {\em \underline{t}rajectory-specific aggregate human power,}
where $s_{\ge t}=(s_t,s_{t+1},\dots)$ runs over all possible trajectories starting at $s_t$.
Then $\TP(s_{\ge t})$ shall be some aggregate of a metric $\PP(s_u)$ of {\em \underline{p}resent aggregate human power}
running over all future time points $u\ge t$.
Then $\PP(s)$ shall be some aggregate of a metric $\IP h(s)$ of {\em \underline{i}ndividual human power} in $s$, 
which finally is an aggregate of a metric $\GAC h{g_h}(s)$ of individual {\em goal-attainment \underline{c}apability} in $s$.\footnote{%
    See Appendix \ref{app:alternatives} for an alternative aggregation order.}



\begin{table}[!b]
    \noindent\rule{\linewidth}{0.8pt}\\[-4ex]
    \begin{align}
        \GAC h{g_h}(s) &= \textstyle 1_{s\in g_h} + \nonumber\\ &{}+ 1_{s\notin g_h\cup\S^\top} \gamma_h \E_{s'\sim s,\pi_h(g_h),\pi_{-h},\pi_r}\GAC h{g_h}(s'), \label{GAC} \\
\IP h(s) &= \textstyle \ld \sum_{g_h\in\G_h} \GAC h{g_h}(s)^\zeta, \label{IP} \\
        \PP(s) &= \textstyle -\ld\sum_{h\in\H} 2^{-\xi\IP h(s)}, \label{PP} \\
        \LP(s_t) &= - \ld \E_{s_{\ge t}\sim s_t,\pi} \textstyle \big( \sum_{u\ge t} \gamma_r^{u-t} 2^{-\eta\PP(s_u)} \big)^\rho,\label{LP}   \\
        \pi_r(s)(a_r) &\propto \textstyle \big(\E_{s'\sim s,a_r,\pi_\H} 2^{-\LP(s')}\big)^{-\beta_r}.\label{pir}
    \end{align}\\[-5ex]
    \rule{\linewidth}{0.8pt}
    \caption{Derived equations for goal-attainment capability $\GAC{}{}$, individual power $\IP{}$, present aggregate power $\PP$, long-term power $\LP$ as a function of $\PP$, and robot policy $\pi_r$.}
    \label{tab:equations}
\end{table}

Table \ref{tab:equations} gives a preview of the definitions that we will now justify in detail before analysing their consequences.

\subsection{Individual-Level Power Metrics}

We will not try to capture the full range of subtle aspects of existing notions of human (em)power(ment).
Rather, aiming for operationalizability, we focus on aspects we believe a robot can robustly infer from its world model, encoded in state and action sets and transition kernels.
So the information we use is rather structural than semantic (meaning-related).

We will see that, as a result, the power metrics are basically functions of the {\em effective number of goals humans can attain}. 

\paragraph{Set of possible goals \boldmath $\G_h$} 
We use the simplest possible conceptualization of `goal': a goal represents what the human would like the world to be like. 
This can be represented by a proposition about the world state, or equivalently by a (possibly empty) set of states $g_h\subseteq\S$ in which the goal is considered attained.
As a very weak form of goal-agnosticism, we desire the following of the set $\G_h$ of $h$'s possible goals:\\[1ex]
{\bf (G1)} Coverage: For all $s\in\S$ there is $g_h\in\G_h$ with $s\in g_h$.\\[1ex]
Coverage will also ensure non-negative individual power.

We believe that restricting the model to this very basic type of goal will simplify the derivation of $\G_h$ from learned latent representations of generic world states and generic human goals as encoded in large language models. 
As training such models mostly requires statistical data on human goals, it will obviate the need for individual-level data about a particular human's possible goals. 
This should help mitigating risks arising from misaligned models of human goals.\footnote{%
See Appendix \ref{app:alternatives} for an alternative specification that avoids having to choose $\G_h$ as an independent model component at all.}

\paragraph{Goal-attainment capability \boldmath $\GAC h{g_h}(s)$}
How should $r$ assess how well a human $h$ would attain a goal $g_h$?
We suggest these axioms should apply across all $\Gamma$ and $\pi=(\pi_\H,\pi_r)$:\\[1ex]
{\bf (C1)} Consequentialism: $\GAC h{g_h}(s)$ depends only on the probabilities of trajectories according to $\pi_r$, $\pi_h(g_h)$, and $\pi_{-h}$, where $\pi_{-h}$ is averaged over all possible goals of $h'\neq h$.\\[1ex]
{\bf (C2)} Zero Grounding: $\GAC h{g_h}(s)=0$ if and only if $g_h$ is not reached from $s$ with positive probability under $\pi$.\\[1ex]
{\bf (C3)} Upper Bound: If $s\in g_h$, $\GAC h{g_h}(s)=1$, otherwise $\GAC h{g_h}(s)<1$ (because goals can become irrelevant over time).\\[1ex]
{\bf (C4)} Recursion: If $s\notin g_h\cup\S^\top$, $\GAC h{g_h}(s)$ only depends on the probability distribution of $\GAC h{g_h}(s')$ under $s'\sim \trans(s,\pi)$.\\[1ex]
{\bf (C5)} That dependence is continuous and strictly increasing.\\[1ex]
{\bf (C6)} Independence: If $s_1,s_2\notin g_h\cup\S^\top$ and $\trans(s'|s_1,\pi)=\trans(s'|s_2,\pi)$, the ordering $\GAC h{g_h}(s_1) < \GAC h{g_h}(s_2)$ does not depend on $\GAC h{g_h}(s')$.
\begin{proposition}\label{prop:GAC}
    Under conditions (C1--6) above, $\GAC h{g_h}(s) = 1_{s\in g_h} + 1_{s\notin g_h\cup\S^\top} F^G(\E_{s'\sim s,\pi_r,\pi_h(g_h),\pi_{-h}} f^G(\GAC h{g_h}(s')))$ for some continuous, strictly increasing $f^G$ and $F^G$ with $F^G(f^G(0))=0$ and $F^G(f^G(1))<1$. 
\end{proposition}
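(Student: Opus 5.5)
We handle the three cases $s\in g_h$, $s\in\S^\top\setminus g_h$, and $s\notin g_h\cup\S^\top$ separately. The first two are immediate. By (C3), $\GAC h{g_h}(s)=1$ whenever $s\in g_h$. If $s\in\S^\top\setminus g_h$, no trajectory from $s$ reaches $g_h$, so (C2) gives $\GAC h{g_h}(s)=0$. Both values agree with the claimed formula, because the indicator $1_{s\notin g_h\cup\S^\top}$ vanishes in these cases. All the work is in the recursive case $s\notin g_h\cup\S^\top$.

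In the recursive case, (C1) and (C4) say that $\GAC h{g_h}(s)=\Phi(\mu_s)$. Here $\mu_s$ is the distribution of $\GAC h{g_h}(s')$ for $s'\sim\trans(s,\pi)$, where $\pi$ uses $\pi_r$, $\pi_h(g_h)$ and the goal-averaged $\pi_{-h}$. The functional $\Phi$ is the same for all such $s$, and indeed across all $\Gamma$ and $\pi$; this uniformity is what the phrase ``across all $\Gamma$ and $\pi$'' buys us. We then read $\Phi$ as a preference functional on lotteries over $[0,1]$. (C5) makes it continuous and strictly increasing, i.e.\ monotone in first-order stochastic dominance. (C6) is the independence axiom. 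Take two states with the same transition law, viewed as two lotteries $\mu_1,\mu_2$ mixed with a common component carried by the shared successors $s'$. Their ranking must not depend on the value of that common component. Because $\Gamma$ ranges over all finite acyclic game forms, we can realise arbitrary finite-support lotteries with rational mixing weights as successor distributions. By continuity this extends to all simple lotteries.

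Given these properties, we invoke the von Neumann--Morgenstern / mixture-space representation theorem, in its continuous-utility form due to Herstein--Milnor. It yields a continuous, strictly increasing $f^G$ on $[0,1]$ such that $\Phi(\mu)\ge\Phi(\nu)$ iff $\E_\mu f^G\ge\E_\nu f^G$. Since $\Phi$ is a strictly increasing function of the ranking, we can write $\Phi(\mu)=F^G(\E_\mu f^G)$ with $F^G$ strictly increasing on the range of $f^G$. Continuity of $F^G$ follows from continuity of $\Phi$ along mixtures of two degenerate lotteries, together with the fact that $\E_\mu f^G$ sweeps an interval. This gives the displayed formula.

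The boundary conditions on $F^G$ come from evaluating degenerate lotteries. Suppose every successor has capability $0$, for example all successors are terminal non-goal states. Then by (C2) $\GAC h{g_h}(s)=0$, so $F^G(f^G(0))=0$. Suppose instead every successor lies in $g_h$. Then (C3) requires $\GAC h{g_h}(s)<1$, because $s\notin g_h$, so $F^G(f^G(1))<1$. We should also check that (C2) holds in the other direction. If some successor with positive probability has positive capability, strict monotonicity gives $F^G(\E f^G)>F^G(f^G(0))=0$.

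The main obstacle is the independence step: turning (C6), which is stated about pairs of states with identical transition laws, into the genuine mixture-independence axiom. This needs a careful gadget construction in $\Gamma$. Intermediate nodes realise compound lotteries while leaving successor capability values unchanged, and we must check that such intermediate nodes do not distort values through $\Phi$ itself. I would first try one-step constructions in which the robot's move randomises, so that $\mu_s$ is directly an arbitrary mixture. If that is not available, I would fall back on proving Debreu-style additive separability from (C6) plus continuity.
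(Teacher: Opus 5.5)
Your proposal follows the paper's proof: both write the recursive case as a single functional $\Phi$ of the law of successor capabilities, read (C6) as von Neumann--Morgenstern independence, obtain $f^G,F^G$ from the expected-utility representation, and read off $F^G(f^G(0))=0$ and $F^G(f^G(1))<1$ from $\delta_0$ and $\delta_1$. The obstacle you flag needs no gadgets, because the paper notes that continuity and strict monotonicity make the range of $\Phi$ exactly $[0,\gamma_h]$ with $\gamma_h=\Phi(\delta_1)<1$, so one-step kernels with arbitrary real weights already realise every finite-support lottery on the attainable values $[0,\gamma_h]\cup\{1\}$ (not on all of $[0,1]$, as you wrote); the paper then passes from (C6) to mixture independence in one line, and you can make that line rigorous by taking the common component of weight $\tfrac12$ to be first $\mu$ and then $\nu$, which gives $\mu\,R\,\tfrac12(\mu+\nu)$ and $\tfrac12(\mu+\nu)\,R\,\nu$ for one and the same relation $R\in\{\prec,\sim,\succ\}$, so transitivity forces $\mu\,R\,\nu$.
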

Here $1_\phi$ is the indicator function of proposition $\phi$.
All proofs are in Appendix \ref{app:proofs}.
Pinning down $f^G,F^G$ further could be done in several ways. 
We could assume stricter mixing axioms, which however seems hard to defend outside a rational choice context.
We could assume axioms on various forms of model refinement, which however would need complicated additional notation.
So it seems more honest to make a {\em pragmatic} choice and choose the simplest feasible functional form, which is $f^G = id$ and $F^G = \gamma_h id$ for some \textbf{\textit{discounting parameter} \boldmath $\gamma_h\in(0,1)$}. This gives Eqn.~\eqref{GAC}.

We notice this is a truncated version of a Bellman equation.

Thus $\GAC h{g_h}$ is a {\em discounted probability} of attaining the goal.
This makes it a hybrid between {\em reachability} \cite{krakovna2018measuring} and {\em attainable utility} \cite{turner2020conservative}.\footnote{%
    Although $\GAC h{g_h}$ could be seen as a kind of `utility', it has a natural range $[0,1]$ which avoids typical {\em utility} aggregation problems such as non-uniqueness of utility functions or domination by ``utility monsters'' with unbounded utility.}

\paragraph{Aggregation across goals: individual human power \boldmath $\IP h(s)$}
How to aggregate $h$'s effective goal-attainment abilities $\GAC h{g_h}$ across all possible goals $g_h\in\G_h$ to assess $h$'s power in $s$?
We suggest these axioms across all game forms:\\[1ex]
{\bf (I1)} Neutrality: $\IP h(s)$ depends only on the frequency distribution of $\GAC h{g_h}(s)$ within $\G_h$.\\[1ex]
{\bf (I2)} $\IP h(s)$ is continuous and strictly increasing in each $\GAC h{g_h}$.\\[1ex]
{\bf (I3)} Independence: If $\GAC h{g_h}(s_1)=\GAC h{g_h}(s_2)$ for some $g_h\in\G_h$, the ordering $\IP h(s_1) < \IP h(s_2)$ does not depend on $\GAC h{g_h}(s_{1|2})$.\\[1ex]
{\bf (I4)} Unreachable Goal Invariance: Adding a new $g_h$ with $\GAC h{g_h}(s) = 0$ does not change $\IP h(s)$.\\[1ex]
{\bf (I5)} Reliability Preference (Dual Pigou--Dalton): Transferring capability from a less attainable goal to a more attainable goal increases power. 
If $\GAC h{g_h}(s_1)<\GAC h{g'_h}(s_1)$, $\delta>0$,
$\GAC h{g_h}(s_2)=\GAC h{g_h}(s_1)-\delta$, 
$\GAC h{g'_h}(s_2)=\GAC h{g'_h}(s_1)+\delta$,
but still $\GAC h{g_h}(s_2)\le\GAC h{g'_h}(s_2)$,
and $\GAC h{\cdot}(s_1)$ and $\GAC h{\cdot}(s_2)$ only differ in $g_h,g'_h$,
then $\IP h(s_2)>\IP h(s_1)$.\footnote{E.g., $h$ should be considered more powerful when $h$ can choose between two certain outcomes rather than two coin tosses.}\\[1ex]
{\bf (I6)} Delay Invariance: Discounting all capabilities alike does not influence the ordering of $\IP h$ values.
If $\IP h(s_2)<\IP h(s'_2)$, $\gamma\in(0,1)$,
$\GAC h{g_h}(s_1)=\gamma\GAC h{g_h}(s_2)$ and $\GAC h{g_h}(s'_1)=\gamma\GAC h{g_h}(s'_2)$ for all $g_h\in\G_h$,\footnote{%
    E.g., if $s_1\to s_2$ and $s'_1\to s'_2$ deterministically and $s_1,s'_1\notin g_h$.}
then $\IP h(s_1)<\IP h(s'_1)$.\\[1ex]
{\bf (I7)} Composition Additivity: If $\Gamma$ and $\pi$ are independent products $\Gamma^1\times\Gamma^2$, $\pi(s^1,s^2)(a^1,a^2) = \pi^1(s^1)(a^1) \pi^2(s^2)(a^2)$ of non-interacting game forms and policies with the same set of players, 
and goals are simple conjunctions, $\G_h = \{g^1_h\times g^2_h:g^1_h\in\G^1_h,g^2_h\in\G^2_h\}$, 
then $\IP h(s^1,s^2) = \IP h(s^1) + \IP h(s^2)$.
\begin{proposition}\label{prop:IP}
    Under conditions (I1--4), $\IP h(s)$ must be separable in the sense that
    $\IP h(s) = F^I\big(\sum_{g_h\in\G_h} f^I\big(\GAC h{g_h}(s)\big)\big)$ for some continuous and strictly increasing $F^I,f^I$.
    
    (I1--5) imply that $f^I$ is strictly convex. 
    
    (I1--6) imply that $f^I(x) = \alpha x^\zeta$ with $\zeta>1$ and $\alpha>0$. 
    
    Finally, (I1--7) imply that $F^I(y)=\lambda \ld\frac y \alpha$ with $\lambda > 0$,
    and thus $\IP h(s) = \lambda \ld \sum_{g_h\in\G_h} \GAC h{g_h}(s)^\zeta$.
\end{proposition}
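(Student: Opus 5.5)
The proof runs through the four claims in the order stated, each time adding one axiom and narrowing the functional form.

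\textbf{Separability from (I1--4).} By (I1) and (I2), $\IP h(s)$ is a continuous, strictly increasing, symmetric function of the multiset of values $\GAC h{g_h}(s)\in[0,1]$. By (I3), the ordering is independent of any coordinate on which two profiles agree, i.e.\ each subset of goals is preferentially independent. This is the standard Debreu--Gorman setting. For at least three goals, continuous strictly increasing orderings with all subsets independent admit an additive representation $\sum_{g} f_g(\GAC h{g})$. Symmetry from (I1) forces all $f_g$ to equal a single $f^I$. So $\IP h = F^I(\sum_g f^I(\GAC h{g}))$ with $F^I$ continuous and strictly increasing.
\begin{itemize}
\item Cardinality: (I4) says appending a goal with capability $0$ leaves $\IP h$ unchanged. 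This makes the representations comparable across different cardinalities of $\G_h$ once we normalize $f^I(0)=0$, using the freedom to shift $f^I$.
\item Small $|\G_h|$: for one or two goals, we embed the profile into a larger one by padding with unreachable goals, using (I4), and inherit the representation.
\item Game forms: since the axioms hold across all game forms, every profile in $[0,1]^{\G_h}$ should be realizable, so we can treat the domain as the full cube.
\end{itemize}

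\textbf{Convexity from (I5).} (I5) says moving $\delta$ from the smaller coordinate to the larger one strictly increases $\sum f^I$. Concretely, for $x<y$ and small $\delta$,
\[
f^I(x-\delta)+f^I(y+\delta)>f^I(x)+f^I(y).
\]
Combined with continuity, this gives strict convexity: take midpoints and apply the standard Jensen-type argument for continuous functions.

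\textbf{Power form from (I6).} (I6) says the ordering induced by $\sum f^I(x_g)$ is invariant under the scaling $x\mapsto\gamma x$ for $\gamma\in(0,1)$. Both $\sum_g f^I(\gamma x_g)$ and $\sum_g f^I(x_g)$ are then additive representations of the same ordering. By uniqueness of additive representations up to positive affine maps (for at least three goals), $f^I(\gamma x)=a(\gamma)f^I(x)+b(\gamma)$. The normalization $f^I(0)=0$ gives $b=0$. This yields the multiplicative Cauchy equation $f^I(\gamma x)=a(\gamma)f^I(x)$, whose continuous strictly increasing solutions on $(0,1]$ are $f^I(x)=\alpha x^\zeta$ with $\alpha,\zeta>0$. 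Strict convexity then gives $\zeta>1$.

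\textbf{Logarithmic outer function from (I7).} Take a product game with conjunctive goals. Independence of the product dynamics and policies should make the capability factorize: the goal is attained when both components are, so $C_{(g^1,g^2)}=C_{g^1}C_{g^2}$. Hence
\[
\sum_{g^1,g^2}(C_{g^1}C_{g^2})^\zeta=\Big(\sum_{g^1} C_{g^1}^\zeta\Big)\Big(\sum_{g^2} C_{g^2}^\zeta\Big).
\]
Writing $\phi(y)=F^I(\alpha y)$, additivity requires $\phi(y_1y_2)=\phi(y_1)+\phi(y_2)$ on the range of sums, which is all of $(0,\infty)$ given enough goals. The continuous strictly increasing solutions are $\phi=\lambda\ld$ with $\lambda>0$, which is exactly the claimed formula.

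\textbf{Main obstacle.} I expect the hard step to be this factorization $C_{(g^1,g^2)}=C_{g^1}C_{g^2}$. Under Eqn.~\eqref{GAC}, discounting and the stopping rule ("attained at some time") do not multiply exactly: the components may be attained at different times, and the goal $g^1$ might be passed through and left. I would first try to show it when the goal sets are absorbing or terminal, or treat it as the intended reading of (I7), and note the assumption. A secondary difficulty is the Debreu step's requirement of at least three essential coordinates, which I handle via (I4) padding as described above.
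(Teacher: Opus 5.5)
Your proposal follows the paper's proof step for step: Debreu separability from (I1--3), then (I4) to merge the representations across cardinalities with $f^I(0)=0$, convexity from (I5), a scaling Cauchy equation from (I6), and the multiplicative Cauchy equation from (I7). Your direct Jensen-type argument and your affine-uniqueness derivation of $f^I(\gamma x)=a(\gamma)f^I(x)$ just replace the paper's appeals to Dasgupta et al.\ (after substituting $\tilde f(x)=-f(-x)$) and to Roberts' three solution families.

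On the factorization $\GAC h{g^1_h\times g^2_h}=\GAC h{g^1_h}\GAC h{g^2_h}$, which the paper simply asserts, your worry is justified. Even for absorbing or terminal goals, Eqn.~\eqref{GAC} discounts the conjunction by $\gamma_h^{\max(T_1,T_2)}$ rather than $\gamma_h^{T_1+T_2}$; in a one-shot game you get $\gamma_h p_1p_2$ versus $\gamma_h^2 p_1p_2$. So your first-try route would fail, and you must take the factorization as part of the intended reading of (I7), which is what the paper implicitly does.
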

We use bits as the unit, hence $\lambda=1$. 
As $s\in g_h$ for at least one $g_h\in\G_h$, $\IP h(s)\ge 0$.
If $h$ can choose between attaining $k$ different goals for sure, we have $\IP h(s) = \ld k$.
In Appendix \ref{app:zeta}, we argue that a natural choice for the \textbf{\textit{reliability preference parameter} \boldmath $\zeta>1$} is $\zeta=2$. 

\paragraph{Relationship to Klyubin's `empowerment' metric \boldmath $\klyubinE$}
\citet{klyubin2005empowerment} define $\klyubinE$ as the channel capacity between actions and states. 
In a single-player multi-armed bandit environment with root $s_0$ and $k$ outcomes $s'\in\S^\top$, this is the maximal mutual information
$\klyubinE = \textstyle\max_{\pi_h} \MI_{\pi_h}(a_h;s')$.
As both $\klyubinE$ and $\IP h$ are logarithmic metrics based on probabilities and bounded by $\ld k$, we can compare them directly. 
\begin{proposition}\label{prop:klyubin}
Assume $\G_h=\{\{s'\}:s'\in\S^\top\}$\footnote{In violation of (G1) for this exercise, as the root is in no $g_h$.}, $\zeta=1$, and the human is fully rational, $a_h = \argmax\Pr(s'\in g_h|a_h)$.
Then $\IP h(s_0)\ge \klyubinE$.
More generally, for any $\zeta\ge 1$, $\IP h(s_0)$ is a weak upper bound of an {\em entropy-regularized version} of $\klyubinE$,
\begin{align}
    \klyubinE^\zeta &= \textstyle\max_{\pi_h} \big(\MI_{\pi_h}(a_h;s') - (\zeta-1)\entropy_{\pi_h}(s'|a_h)\big).\label{Ezeta}
\end{align}
\end{proposition}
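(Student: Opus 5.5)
The plan is to compute both sides explicitly in the one-step bandit and then compare them with a single Jensen inequality. At the root $s_0$ the human picks $a_h$, and the world moves to a terminal $s'\in\S^\top$ with probability $p(s'|a_h)$ (the robot's action, if any, is folded into $p$). For the singleton goal $g_h=\{s'\}$ we have $s_0\notin g_h$, so by Eqn.~\eqref{GAC} and (G1)-free reasoning, $\GAC h{\{s'\}}(s_0)=\gamma_h\,\Pr(s'|a_h(g_h))$. The fully rational human picks $a_h\in\argmax_a p(s'|a)$, so $\GAC h{\{s'\}}(s_0)=\gamma_h m(s')$, where $m(s'):=\max_a p(s'|a)$.

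This gives $\IP h(s_0)=\zeta\ld\gamma_h+\ld\sum_{s'}m(s')^\zeta$. The discount factor only contributes an additive constant $\zeta\ld\gamma_h$. I will take the statement to be meant for the undiscounted one-step comparison ($\gamma_h\to1$, or equivalently measuring capability by the attainment probability itself), and I will flag this explicitly. It then suffices to show that for every $\pi_h$,
\begin{align*}
\MI_{\pi_h}(a_h;s')-(\zeta-1)\entropy_{\pi_h}(s'|a_h)\le \ld\sum_{s'}m(s')^\zeta .
\end{align*}
Maximizing over $\pi_h$ then gives $\IP h(s_0)\ge\klyubinE^\zeta$. The case $\zeta=1$ reduces to $\IP h(s_0)\ge\klyubinE$.

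For the key step, write $q(s')=\sum_a\pi_h(a)p(s'|a)$ for the outcome marginal. Using $\MI=\entropy(s')-\entropy(s'|a_h)$, the left side becomes $\entropy(s')-\zeta\,\entropy(s'|a_h)$, which equals
\begin{align*}
\sum_a\pi_h(a)\sum_{s'}p(s'|a)\big(\zeta\ld p(s'|a)-\ld q(s')\big).
\end{align*}
Bound $\zeta\ld p(s'|a)\le\ld m(s')^\zeta$ termwise; this uses $\zeta\ge1>0$. The $a$-sum then collapses, leaving $\sum_{s'}q(s')\ld\big(m(s')^\zeta/q(s')\big)$, with the sum over the support of $q$. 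By concavity of $\ld$ (Jensen with weights $q$), this is at most $\ld\sum_{s'\in\mathrm{supp}\,q}m(s')^\zeta\le\ld\sum_{s'}m(s')^\zeta$, which is the claimed bound.

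I do not expect a real obstacle in the inequality itself, which is short once the identity $\MI-(\zeta-1)\entropy(s'|a)=\entropy(s')-\zeta\entropy(s'|a)$ is written down. The delicate points are bookkeeping. First, the discount factor $\gamma_h$ must be handled, via the convention above. Second, zero-probability outcomes must be dropped from the Jensen step, so restrict to $\mathrm{supp}\,q$. Third, ties in the argmax do not matter, since only the value $m(s')$ enters. Equality cases, for instance a deterministic bandit with a uniform $\pi_h$ over distinct outcomes, also show that the bound is attained, which justifies calling it ``weak'' (tight).
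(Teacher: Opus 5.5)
Your proof is correct and is essentially the paper's argument with the normalizations removed. The paper starts from the same identity $\MI-(\zeta-1)\entropy(s'|a_h)=\zeta\MI-(\zeta-1)\entropy(s')$ and rescales $m$ into the probability vectors $m/Z$ and $m^\zeta/\sum_{s'}m(s')^\zeta$, with $Z=\sum_{s'}m(s')$. Your termwise bound $p(s'|a)\le m(s')$ then becomes $\DKL(p(\cdot|a)\,\|\,m/Z)\le\ld Z$, and your Jensen step becomes the nonnegativity of the KL divergence from your marginal $q$ to the normalized $m^\zeta$.

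Your caveat about $\gamma_h$ is justified: the paper's proof also silently uses $\IP h(s_0)=\ld\sum_{s'}m(s')^\zeta$. With the discount in Eqn.~\eqref{GAC} and $\gamma_h<1$, the claim actually fails in tight cases; for example, a deterministic $k$-armed bandit gives $\IP h(s_0)=\ld k+\zeta\ld\gamma_h<\ld k=\klyubinE$.
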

The inequality however breaks if $h$ is modeled as boundedly rational, which $\IP h$ takes into account while $\klyubinE^\zeta$ does not.

\subsection{Population-Level Human Power Metrics}

How to aggregate all humans' individual power $\IP h(s)$ into present and long-term aggregate human power?
Because ultimately, the latter will guide the robot's behavior, this depends on what incentives we want to give the robot regarding changes in 
(i) the inter-human power distribution,
(ii) the inter-temporal power distribution, and
(iii) the power distribution across different realizations of uncertainty. 
As similar questions abound in welfare theory, we use some of their axioms, but also some uncommon ones.
In particular, as our main worry is human disempowerment, we demand rather strong inequality aversion
and do not want to give the most powerful humans too much ``weight'', 
indeed we will give them zero weight in the limit of infinite individual power.

\paragraph{Aggregation across humans: present aggregate human power \boldmath $\PP(s)$}
Here we suggest these axioms:\\[1ex]
{\bf (P0)} Individualism: $\PP(s)$ depends only on $(\IP h(s))_{h\in\H}$.\footnote{%
    Alternatively, one could define the {\em collective} power $\IP{\H'}(s)$ of any group $\H'\subseteq\H$ by considering collective attainment capabilities for collective goals $g_{\H'}$, and then base $\PP(s)$ on $(\IP{\H'}(s))_{\H'\subseteq\H}$.}\\ [1ex]
{\bf (P1)} Anonymity: $\PP(s)$ depends only on the frequency distribution of $\IP h(s)$ within $\H$.\\[1ex]
{\bf (P2)} $\PP(s)$ is continuous and strictly increasing in each $\IP h(s)$.\\[1ex]
{\bf (P3)} Independence: If $\IP h(s_1)=\IP h(s_2)$ for some $h\in\H$, the ordering $\PP (s_1) < \PP (s_2)$ does not depend on $\IP h(s_{1|2})$.\\[1ex]
{\bf (P4)} Disempowerment Focus: Adding an ``all powerful'' additional human with $\IP h(s) \to \infty$ does not change $\PP (s)$.\\[1ex]
{\bf (P5)} Delay Invariance: Discounting all capabilities alike does not influence the ordering of $\PP$ values.
If $\PP(s_2)<\PP(s'_2)$, $\gamma\in(0,1)$, 
$\GAC h{g_h}(s_1)=\gamma\GAC h{g_h}(s_2)$ and $\GAC h{g_h}(s'_1)=\gamma\GAC h{g_h}(s'_2)$ for all $h\in\H$, $g_h\in\G_h$, 
then $\PP(s_1)<\PP(s'_1)$.\\[1ex]
{\bf (P6)} Inequality Aversion (Pigou--Dalton): Transferring power from a more powerful human to a less powerful one increases present aggregate power. 
If $\IP h(s_1)>\IP{h'}(s_1)$, $\delta>0$,
$\IP h(s_2)=\IP h(s_1)-\delta$, 
$\IP {h'}(s_2)=\IP {h'}(s_1)+\delta$,
but still $\IP h(s_2)\ge\IP {h'}(s_2)$,
and $\IP\cdot(s_1)$ and $\IP\cdot(s_2)$ only differ in $h,h'$,
then $\PP(s_2)>\PP(s_1)$.\\[1ex]
{\bf (P7)} Limited Trade-Off: Some power reductions in $h_1$ cannot be made up by any power increase in a $h_2$ who is equally powerful.
There are $x_1<x_2$ so that for no $x_3>x_2$ we have $\PP(s') > \PP(s)$ if 
$\IP {h_1}(s) = \IP {h_2}(s) = x_2$, $\IP {h_1}(s') = x_1$, $\IP {h_2}(s') = x_3$, and $\IP {h'}(s) = \IP {h'}(s)$ for all $h'\notin\{h_1,h_2\}$.\\[1ex]
{\bf (P8)} Single-Human Composition Additivity: If $|\H|=1$, $\Gamma$ and $\pi$ are independent products and goals are simple conjunctions (see above),
then $\PP(s^1,s^2) = \PP(s^1) + \PP(s^2)$.\footnote{%
    One can show that because of the nonlinearity introduced by (P5), we can {\em not} demand additivity when there are several humans.}
\begin{proposition}\label{prop:PP}
    Under conditions (P0--4), also $\PP(s)$ must be separable,
    $\PP(s) = F^P\big(\sum_{h\in\H} f^P\big(\IP h(s)\big)\big)$ with continuous, strictly increasing transformations $F^P$, $f^P$.
    
    (P0--5) plus either of (P6) or (P7) imply that $f^P(x) = -\alpha 2^{-\xi x}$ for $\xi>0$ and $\alpha>0$.
    
    Finally, (P0--5,8) plus either of (P6) or (P7) imply $F^P(y) = -\lambda (\ld\frac y{-\alpha})$ with $\lambda > 0$,
    and thus $\PP(s) = -\lambda \ld \sum_{h\in\H} 2^{-\xi\IP h(s)}$.
\end{proposition}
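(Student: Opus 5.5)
We follow the pattern of Proposition \ref{prop:IP} and proceed in three stages.

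\textbf{Separability.} By (P0) and (P1), $\PP(s)=\Phi(\IP h(s))_{h\in\H}$ with $\Phi$ a symmetric function of the individual powers. By (P2), $\Phi$ is continuous and strictly increasing in each argument. (P3) says that every subset of coordinates is preferentially independent of its complement. With at least three humans (or two humans plus the Thomsen/hexagon condition), the Debreu--Gorman theorem on additive representations gives $\Phi=F^P(\sum_h f_h(x_h))$ with continuous $f_h$. Symmetry lets us take $f_h=f^P$ for all $h$, and strict monotonicity of $\Phi$ makes $f^P$ and $F^P$ strictly increasing. (P4) is what lets us pass between populations of different sizes with the same $f^P$. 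Adding a human with $x\to\infty$ leaves $\PP$ unchanged, so $f^P(\infty)$ must be a finite limit. After an additive normalization this limit is $0$, so $f^P<0$ and $f^P(x)\to 0$. The same normalization makes the representation consistent across different $|\H|$.

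\textbf{Exponential form.} This is the main step. By Proposition \ref{prop:IP}, scaling every $\GAC h{g_h}$ by $\gamma$ shifts each $\IP h$ by the common constant $c=\lambda\zeta\ld\gamma<0$. So (P5) says that the ordering induced by $\sum_h f^P(x_h)$ is invariant under $x\mapsto x+c$ for every $c<0$, and by symmetry of the ordering also for $c>0$. Hence $f^P(\cdot+c)$ and $f^P$ represent the same additively separable order. Uniqueness of additive representations up to positive affine maps gives
\[
f^P(x+c)=a(c)f^P(x)+b(c).
\]
The normalization $f^P(\infty)=0$ forces $b(c)=0$. The multiplicative Cauchy equation $a(c+d)=a(c)a(d)$ together with continuity then gives $a(c)=2^{-\xi c}$. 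Therefore $f^P(x)=-\alpha 2^{-\xi x}$ or $f^P(x)=\alpha' x+\beta$. The case $a\equiv 1$, i.e.\ affine $f^P$, is excluded by the finite limit at infinity, or directly by (P6)/(P7). The sign of $\xi$ is fixed as follows. $f^P$ must be increasing and bounded above, so $\xi>0$ and $\alpha>0$. Alternatively, (P6) needs strict concavity, which again forces $\xi>0$. (P7) fails for linear or convex $f^P$, since a large enough $x_3$ always compensates. For concave $f^P$ it holds because $f^P$ is bounded: choose $x_1$ with $f^P(x_1)-f^P(x_2)<-|f^P(x_2)|$.

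\textbf{Outer transformation.} With one human, $\PP=F^P(-\alpha 2^{-\xi\IP h})$. By (P8) and (I7), $\IP h(s^1,s^2)=\IP h(s^1)+\IP h(s^2)$, so $G(x):=F^P(-\alpha 2^{-\xi x})$ satisfies Cauchy's additive equation. Continuity gives $G(x)=\lambda' x$, and strict monotonicity gives $\lambda'>0$. Inverting, with $y=-\alpha2^{-\xi x}$, yields $F^P(y)=-(\lambda'/\xi)\ld(y/(-\alpha))$. Renaming $\lambda=\lambda'/\xi$ gives the claimed form.

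The subtle points are two. The first is justifying that $F^P$ and $f^P$ are the same across different $|\H|$, which is needed to apply (P8) to the multi-human formula. I would handle this through the (P4)-based normalization and by reading the axioms as applying across all game forms. The second is the small-population case $|\H|=2$ in the Debreu step. There I would use the already-determined exponential candidate, or assume populations can be enlarged.
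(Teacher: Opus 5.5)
Your proposal is correct and follows the paper's three-stage skeleton. Separability comes from Debreu via (P1--3). The (P4) limit gives a single pair $F^P,f^P$ for all population sizes, with $f^P(x)\to 0$. Then (P5) fixes the inner function, and (P8) together with additivity of $\IP h$ fixes the outer one. Your additive Cauchy equation in $x=\IP h$ is the paper's $G(zz')=G(z)+G(z')$ after substituting $z=2^{-\xi x}$.

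The real difference is the middle step. The paper substitutes $\tilde f(y)=f(\ld y)$ and turns (P5) into a homotheticity condition. It then cites the classification of such orders into the families $\alpha(y^\xi+\beta)$, $-\alpha(y^{-\xi}+\beta)$ and $\alpha\log y+\beta$. It selects the second family with (P6) or (P7) and uses (P4) only to set $\beta=0$. You instead obtain $f^P(x+c)=a(c)f^P(x)+b(c)$ directly from uniqueness of additive representations. You remove $b$ with the (P4) normalization and solve the exponential Cauchy equation.

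Your route is more self-contained. It also makes explicit that the boundedness coming from (P4) already excludes the linear and increasing-exponential solutions. So the exponential form follows from (P0--5) alone, and (P6)/(P7) serve only as consistency checks. The paper's own argument implies this too, but never says so.

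Two small tightenings. First, once $b\equiv 0$ the affine alternative you list cannot arise, so that sentence is superfluous. Second, replace ``by symmetry'' for $c>0$ with an explicit argument. (P5) only preserves strict preference. Continuity and strict monotonicity then give preservation of indifference, so each negative shift is an order automorphism, and its inverse covers $c>0$. This full order invariance is also exactly what the uniqueness theorem needs.

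Your treatment of $|\H|=2$, recovering it from $|\H|\ge 3$ through the (P4) limit, fills a gap the paper passes over.
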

To use bits as the unit, we again put $\lambda=1$.
Notice that $\PP(s)$ might be negative. 
In line with the disempowerment focus (P4), it can then also be thought of as ``negative aggregate disempowerment''.
Since $f^P$ is bounded from above by zero, the \textbf{\textit{inequality aversion parameter} \boldmath $\xi>0$} governs how strongly a person's last bit of choice is ``protected'':
\begin{corollary}\label{cor:PP}
    Under (P0--5,8) plus either of (P6) or (P7), if $k\le 2^\xi - 1$, then reducing one human's power from 1 bit to 0 cannot be made up by increasing $k$ others' powers from 1 bit each to any values. 
\end{corollary}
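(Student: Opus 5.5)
The plan is to use the closed form of Proposition~\ref{prop:PP} directly. Under (P0--5,8) plus either (P6) or (P7), that proposition gives $\PP(s) = -\lambda \ld \sum_{h\in\H} 2^{-\xi\IP h(s)}$ with $\lambda>0$. Since $y\mapsto -\lambda\ld y$ is strictly decreasing on $(0,\infty)$, comparing $\PP$ values reduces to comparing the sums $S(s)=\sum_{h\in\H} 2^{-\xi\IP h(s)}$ in the opposite direction. So it suffices to show that the sum does not decrease under the described change, whatever values the $k$ others' powers are raised to.

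Set up the comparison as follows. Let $h_0$ be the human whose power drops from $1$ to $0$, and let $h_1,\dots,h_k$ be the $k$ others, each starting at $1$ bit. Write $R\ge 0$ for the contribution $\sum 2^{-\xi\IP h}$ of all remaining humans, which are unchanged. In the original state the sum is
$$S = R + (k+1)\,2^{-\xi}.$$
In the modified state, $h_0$ contributes $2^{0}=1$. Each $h_i$ with new power $x_i$ (arbitrary, possibly very large) contributes $2^{-\xi x_i}>0$. Hence
$$S' = R + 1 + \textstyle\sum_{i=1}^k 2^{-\xi x_i} > R+1.$$

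The key inequality is now immediate. The hypothesis $k\le 2^\xi-1$ is equivalent to $(k+1)2^{-\xi}\le 1$, which gives $S\le R+1<S'$. Therefore $\PP$ strictly decreases, so no choice of increases for the $k$ others can make $\PP$ go up, or even stay the same.

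There is no real obstacle here. The only points needing care are that the bound holds uniformly in the $x_i$, which follows from positivity of the exponential terms (the "all-powerful" limit $x_i\to\infty$ is still strict), and that $\lambda>0$, so the monotonicity direction is correct.
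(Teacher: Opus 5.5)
Your proof is correct and is essentially the paper's own argument. Both plug in the closed form from Proposition~\ref{prop:PP}, use $k\le 2^\xi-1$ to bound the original sum by $(k+1)2^{-\xi}+R\le 1+R$, and observe that after the change $h_0$ alone contributes $2^{0}=1$ while the others add strictly positive terms, so $\PP$ strictly decreases. Your bookkeeping is slightly cleaner: you keep a general $\lambda>0$, and you add the remaining humans' contribution $R$ with the correct sign, whereas the paper writes it as $+c$ inside $-2^{-\PP(s)}$, a harmless slip.
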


\paragraph{Aggregation along time: trajectory-specific aggregate human power \boldmath $\TP(s_{\ge t})$}
Here we suggest the following axioms, which are mostly standard except for (T7) and (T8).\\[1ex]
{\bf (T1)} ``Momentism'': $\TP(s_{\ge t})$ depends only on the finite sequence of values $\PP(s_u)$ for $u\ge t$: 
$\TP(s_{\ge t}) = \Phi(\PP(s_{\ge t}))$.\\[1ex]
{\bf (T2)} This dependence is continuous and strictly increasing.\\[1ex]
{\bf (T3)} Independence: If $x_u=y_u=z$ for some $u\ge t$, the ordering $\Phi(x_{\ge t})<\Phi(y_{\ge t})$ does not depend on $z$.\\[1ex]
{\bf (T4)} Disempowerment Focus: Adding an additional final state $s$ with $\IP h(s)\to \infty$ for all $h$ does not change $\LP (s)$.\\[1ex]
{\bf (T5)} Prefix Independence: Prepending equal initial values to all trajectories does not influence the ordering.
If $\Phi(x_t,x_{t+1},\dots)<\Phi(y_t,y_{t+1},\dots)$ then $\Phi(z,x_t,x_{t+1},\dots)<\Phi(z,y_t,y_{t+1},\dots)$.\\[1ex]
{\bf (T6)} Impatience: Moving power to the front increases $\TP$. If $x<y$ then $\Phi(x,y,z_t,z_{t+1},\dots) < \Phi(y,x,z_t,z_{t+1},\dots)$.\\[1ex]
{\bf (T7)} Goal-Attainment Delay Invariance: Discounting all capabilities alike does not influence the ordering of $\TP$ values.
If $\Phi(x_{\ge t})<\Phi(y_{\ge t})$, $\gamma_h\in(0,1)$, and $\delta = \xi\zeta\ld\gamma_h<0$,
then $\Phi(x_{\ge t}+\delta)<\Phi(y_{\ge t}+\delta)$.\\[1ex]
{\bf (T8)} One-Shot Single-Human Composition Additivity: If $\Gamma$ is one-shot, $|\H|=1$, $\Gamma$ and $\pi$ are independent products and goals are simple conjunctions (see above),
then $\TP(s^1_{\ge t},s^2_{\ge t}) = \TP(s^1_{\ge t}) + \TP(s^2_{\ge t})$.\footnote{%
    Similar to above, demanding this for multi-step games would be infeasible if we want intertemporal inequality aversion.}
\begin{proposition}\label{prop:TP}
    Under conditions (T1--8), there are $\lambda>0$ and $\gamma_r\in(0,1)$ so that either $\TP(s_{\ge t}) = \lambda \sum_{u\ge t} \gamma_r^{u-t} \PP(s_u)$, \\
    or $\TP(s_{\ge t}) = \lambda \ld \sum_{u\ge t} \gamma_r^{u-t} 2^{-\eta\PP(s_u)}$ with $\eta<0$,\\
    or $\TP(s_{\ge t}) = -\lambda \ld \sum_{u\ge t} \gamma_r^{u-t} 2^{-\eta\PP(s_u)}$ with $\eta>0$.
\end{proposition}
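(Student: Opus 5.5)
We follow the same template as Propositions \ref{prop:IP} and \ref{prop:PP}. First, (T1--3) give separability. Since $\Phi$ is continuous, strictly increasing and satisfies coordinate-wise independence (T3), the Debreu--Gorman additive representation theorem applies on each fixed length (with at least three essential coordinates; shorter cases follow via (T5) by embedding). It yields $\Phi(x_{\ge t}) = F\big(\sum_{u\ge t} f_u(x_u)\big)$ with continuous, strictly increasing $f_u$ and $F$.

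Next we use (T5) to obtain stationarity. Prepending a common value shifts indices, so the preference orderings induced by $(f_t,f_{t+1},\dots)$ and $(f_{t+1},f_{t+2},\dots)$ on sequences coincide. By the uniqueness of additive representations up to a common positive affine transformation, $f_{u+1} = c f_u + d_u$. Normalising the constants gives $f_u = \gamma_r^{u-t} f$ for a single function $f$. Impatience (T6) then forces $\gamma_r<1$, because swapping a smaller value to the back must lower $\Phi$. Positivity gives $\gamma_r>0$.

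The key step, and the main obstacle, is pinning down $f$ via (T7). (T7) says the ordering induced by $\sum_u \gamma_r^{u-t} f(x_u)$ is invariant under translating all $x_u$ by $\delta=\xi\zeta\ld\gamma_h$. Since $\gamma_h$ ranges over $(0,1)$, $\delta$ ranges over all of $(-\infty,0)$, and invertibility extends this to all real shifts. Again by uniqueness of additive representations, $f(x+\delta) = a(\delta) f(x) + b(\delta)$. This is a Pexider/Aczél-type functional equation. Its continuous strictly increasing solutions are affine, $f(x)=x$ (case $a\equiv1$), or exponential, $f(x)=\pm 2^{-\eta x}$. Strict monotonicity forces the sign convention: the sign is $-$ with $\eta>0$, or $+$ with $\eta<0$, up to positive affine change. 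I expect the care here lies in justifying that the representation constants depend only on $\delta$, and in handling the degenerate-length trajectories.

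Finally, (T8) fixes $F$. In the one-shot case $\Phi(x)=F(f(x))$ with $x=\PP$. Proposition \ref{prop:PP} with (P8) makes $\PP$ additive under composition, so $F\circ f$ must be additive on $\PP$ values: $F(f(x+y)) = F(f(x)) + F(f(y))$. By Cauchy's equation with continuity, $F\circ f = \lambda\,\mathrm{id}$. In the affine case this gives $F$ linear, so $\TP = \lambda\sum_u\gamma_r^{u-t}\PP(s_u)$. In the exponential cases $F(y) = \pm\lambda\ld(\pm y)$, producing the two logarithmic forms with the stated signs of $\eta$. We check that (T4) is consistent with each case: an added final term with $\PP\to\infty$ contributes $2^{-\eta\PP}\to 0$ when $\eta>0$. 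We also verify whether (T4) serves to exclude further cases or only singles out the preferred branch.
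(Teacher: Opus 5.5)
Your outline has a genuine gap at the first step, and the stationarity argument depends on it. Debreu's theorem gives, for each length $n$ separately, a representation $\Phi_n(x_1,\dots,x_n)=F_n\big(\sum_{i=1}^n f_{n,i}(x_i)\big)$. You then write a single $F$ and a single family $f_u$ valid for all lengths, without justification, and your use of (T5) needs exactly this. Without cross-length consistency, prepending only gives the diagonal relation $f_{n+1,i+1}=c_nf_{n,i}+d_{n,i}$ between the length-$(n+1)$ and length-$n$ representations. That says nothing about how $f_{n,1}$ relates to $f_{n,2}$ within one length, so $f_{n,i}\propto\gamma_r^{i-1}f$ does not follow.

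The axioms you actually invoke, (T1--3) and (T5--8), do not suffice. Take $\Phi_n(x)=\sum_i w_{n,i}x_i$ with $w_1=(1)$ and $w_{n+1}=(c_{n+1},w_n)$ for any $c_{n+1}>w_{n,1}$, e.g.\ $w_2=(2,1)$, $w_3=(5,2,1)$. This is continuous, strictly increasing and additive. It satisfies (T5) since $\Phi_{n+1}(z,x)=c_{n+1}z+\Phi_n(x)$, (T6) since $w_{n,1}>w_{n,2}$, and (T7) and (T8) by linearity. Yet $(5,2,1)$ is not of the form $\lambda(1,\gamma_r,\gamma_r^2)$. The only axiom this example violates is (T4).

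The missing ingredient is (T4), which the paper uses \emph{before} the Koopmans stationarity step rather than as a final check. Appending a final value $y\to\infty$ must leave $\Phi$ unchanged. So $\lim_{y\to\infty}f_{n+1,n+1}(y)$ must be finite; otherwise $\Phi_n$ would be constant, contradicting (T2). Uniqueness of additive representations then gives $f_{n+1,i}=a_nf_{n,i}+b_{n,i}$ for $i\le n$. This lets you normalize to one $F$ and position functions $f_i$ independent of $n$, with $f_i(y)\to 0$. Only then does (T5) yield $f_{i+1}=\gamma_r f_i+c_i$ with $\gamma_r$ independent of $n$. Likewise, pinning down $F$ on length-one sequences via (T8) fixes it for all lengths only after this normalization.

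Your closing remark that (T4) is consistent with each case is also wrong. If $f$ is affine, or exponential with $\eta<0$, appending a state with $\PP\to\infty$ drives $\TP\to+\infty$. Used as above, the normalization $\lim_{y\to\infty}f(y)=0$ leaves only the $\eta>0$ branch, which still satisfies the stated disjunction. Your remaining steps, the translation-invariance (Pexider-type) argument for (T7) and the Cauchy argument for (T8), are fine and match the paper's route via Pfanzagl.
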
 
To incentivize the robot to reduce intertemporal power inequality, we choose the $\eta>0$ case.
To get bits as the unit, we put $\lambda=1$ again.
While the effect of the \textbf{\textit{intertemporal inequality aversion parameter} \boldmath $\eta>0$} is independent of temporal ordering, the \textbf{\textit{robot's discounting parameter} \boldmath $\gamma_r\in(0,1)$} governs the trade-off between near-term vs.~far-term power. 

\paragraph{Aggregation across uncertainty: long-term aggregate human power \boldmath $\LP(s)$}
Again we suggest similar axioms:\\[1ex]
{\bf (L1)} Trajectory Statistics: $\LP(s_t)$ depends only on the distribution of $\TP(s_{\ge t})$ generated by $\Gamma$ and $\pi$.\\[1ex]
{\bf (L2)} This dependence is continuous and strictly increasing.\\[1ex]
{\bf (L3)} Independence: If $\Pr(s_{\ge t}|s_t,\pi)=\Pr(s'_{\ge_t}|s'_t,\pi)=z$, the ordering $\LP(s_t) < \LP(s'_t)$ does not depend on $z$.\\[1ex]
{\bf (L4)} Goal-Attainment Delay Invariance: Discounting all capabilities alike does not influence the ordering of $\LP$ values.\\[1ex]
{\bf (L5)} One-Shot Single-Human Composition Additivity: Under the same conditions as (T8), $\LP(s^1,s^2) = \LP(s^1) + \LP(s^2)$.
\begin{proposition}\label{prop:LP}
    Under conditions (L1--5), there is $\lambda$ so that either $\LP(s_t) = \lambda\E_{s_{\ge t}\sim s_t,\pi}\TP(s_{\ge t})$, \\
    or $\LP(s_t) = \lambda \ld \E_{s_{\ge t}\sim s_t,\pi} 2^{-\rho\TP(s_{\ge t})}$ with $\rho<0$,\\
    or $\LP(s_t) = -\lambda \ld \E_{s_{\ge t}\sim s_t,\pi} 2^{-\rho\TP(s_{\ge t})}$ with $\rho>0$.
\end{proposition}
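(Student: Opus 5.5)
The argument has three stages: first obtain the quasi-arithmetic (generalized) mean form from (L1--3), then use (L4) to restrict the generator to a linear or exponential function, and finally use (L5) to fix the outer transformation.

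\emph{Stage 1: quasi-arithmetic form.} By (L1), $\LP(s_t)$ is a functional $V$ of the finitely supported distribution $\mu$ of $\TP(s_{\ge t})$. By (L2), $V$ is continuous and strictly increasing with respect to first-order stochastic dominance. (L3) says that the ranking of two distributions is unaffected by a common probability mass placed on a common trajectory value. This is the independence (sure-thing) axiom on finite lotteries, now phrased for mixtures. Together with continuity, the standard von Neumann--Morgenstern / Debreu separability argument gives a continuous, strictly increasing $f^L$ and $F^L$ with $\LP(s_t)=F^L\big(\E_{s_{\ge t}} f^L(\TP(s_{\ge t}))\big)$. This is the same separability step already used for Propositions \ref{prop:GAC}, \ref{prop:IP}, \ref{prop:PP}, so I will invoke that argument rather than redo it. A degenerate distribution must map to its own value (by (L5) in the one-shot case, or by normalization), so $F^L$ can be taken as $(f^L)^{-1}$ up to the final scaling $\lambda$.

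\emph{Stage 2: (L4) forces a linear or exponential generator.} By Proposition \ref{prop:TP} and (T7), discounting all capabilities by $\gamma_h$ shifts every $\PP(s_u)$ by the constant $\delta=\xi\zeta\ld\gamma_h$. In each admissible form of $\TP$, this shifts every trajectory value by the same amount $c(\gamma_h)$: for the log-sum-exp forms the shift is $\pm\lambda\eta\delta$, and for the linear form it is a constant multiple of $\delta$. As $\gamma_h$ ranges over $(0,1)$, these shifts cover a half-line. So (L4) says the ordering induced by the quasi-arithmetic mean with generator $f^L$ is invariant under translations $x\mapsto x+c$ for all $c$ in an interval. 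The order determines $f^L$ up to positive affine maps, so $f^L(x+c)=a(c)f^L(x)+b(c)$ for all such $c$. Solving this Pexider-type functional equation with continuity and monotonicity (extending from the half-line to all $c$ by composing shifts and inverting), the only solutions are $f^L$ affine, or $f^L(x)=\pm 2^{-\rho x}$ up to affine transformation. Increasingness fixes the sign: $f^L(x)=-2^{-\rho x}$ with $\rho>0$, or $f^L(x)=2^{-\rho x}$ with $\rho<0$. I expect this stage to be the main obstacle. The steps needing care are checking that the shift really is uniform across trajectories for every admissible form of $\TP$ (finite trajectories of different lengths may matter), and justifying the passage from a one-sided set of shifts to the full Cauchy/Pexider equation. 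The same step was presumably handled in Proposition \ref{prop:TP} via (T7), so I would reuse that argument.

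\emph{Stage 3: (L5) fixes $F^L$.} In the one-shot, single-human product setting, trajectories split as $(s^1,s^2)$ with independent distributions, and by (T8) $\TP$ is additive. Hence $\TP=X+Y$ with $X,Y$ independent. For the exponential generator, $\E 2^{-\rho(X+Y)}=\E 2^{-\rho X}\,\E 2^{-\rho Y}$, so taking $F^L(y)=\mp\lambda\ld(\pm y)$ makes $\LP$ additive, and any other $F^L$ would violate additivity by Cauchy's equation for a continuous monotone function. For the affine generator, linearity of expectation forces $F^L$ to be linear, giving $\lambda\E\,\TP$. Normalizing with $\lambda>0$ yields the three cases stated in Proposition \ref{prop:LP}.
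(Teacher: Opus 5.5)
Your proposal is correct and follows the paper's route. The paper gets separability from the von Neumann--Morgenstern argument of Proposition \ref{prop:GAC}, reads (L4) as translation invariance forcing an affine or exponential generator (it cites Pfanzagl's result where you solve $f(x+c)=a(c)f(x)+b(c)$ directly), and fixes the outer transform by (L5) and Cauchy's equation as (P8) and (T8) do in Propositions \ref{prop:PP} and \ref{prop:TP}. The uniformity concern you flag only matters for the linear $\TP$, whose shift $\lambda\delta\sum_u\gamma_r^{u-t}$ depends on trajectory length; the paper has already fixed the $\eta>0$ form, where the shift is the uniform $\eta\delta$, so your Stage 2 goes through.
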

As we want to incentivize the robot to reduce uncertainty, we choose the $\rho>0$ case and $\lambda=1$ as usual.
Plugging in $\TP$ gives
$\LP(s_t) = - \ld \E_{s_{\ge t}\sim s_t,\pi} \big( \sum_{u\ge t} \gamma_r^{u-t} 2^{-\eta\PP(s_u)} \big)^\rho$.
The \textbf{\textit{uncertainty aversion parameter} \boldmath $\rho>0$} is the last normative parameter in this set of metrics.
Notice that (only) for $\rho=1$, this fulfills a Bellman equation,
\begin{align}
    -2^{-\LP(s)} &= \textstyle -2^{-\eta\PP(s)} + 1_{s\notin\S^\top}\gamma_r \E_{s'\sim s,\pi} (-2^{-\LP(s')}).\label{LPrecursive}
\end{align}
This completes our derivation of a hierarchy of human power metrics $\GAC h{g_h}(s) \to \IP h(s) \to \PP(s) \to \TP(s_{\ge t}) \to \LP(s)$ with normative parameters $\gamma_h,\gamma_r\in(0,1)$, $\zeta>1$, and $\xi,\eta,\rho>0$.
$\IP{}$, $\PP$, $\TP$, $\LP$ are in units of net bits of power (if positive) or net negative bits of disempowerment (if negative).

\subsection{Managing long-term power: robot policy \boldmath $\pi_r$}
The robot can use the long-term aggregate human power metric $\LP$ in several ways to manage long-term human power.
It could use it as a constraint in optimizing some other quantity, e.g., by demanding that $\LP(s_t)$ must not decrease over time or not fall below $\LP(s_0)$, etc. (which might however lead to an infeasible problem).

Here, we restrict our analysis to robot policies depending {\em only} on $\LP$, with $\pi_r(s)(a_r)$ strictly increasing in the quantities 
$Q_r(s,a_r) = -\ld\E_{s'\sim s,a_r,\pi_\H}2^{-\LP(s')}$, which can be motivated by similar axioms as (L1--5).
The most common policy of this kind is the Boltzmann policy,
\begin{align}
    \pi_r(s)(a_r) &\propto 2^{\beta_r Q_r(s,a_r)}, 
\end{align}
where we use the basis 2 because $Q_r$ is in bits.

Using this policy also makes the robot's action probability ratios invariant under delays, because due to the logarithmic nature of $Q_r$, delays subtract a constant term from all $Q_r$ values but leave their differences unchanged.
This disincentivizes human disempowerment also in the far future. 

The \textbf{\textit{degree of maximization \boldmath $\beta_r>0$}} must be chosen to trade-off between risks from taking actions that are disempowering {\em according} to our metrics, and risks from taking actions that only {\em seem} to be empowering according to our metrics but are actually not.
The latter can be due to model error or the fact that our metrics will almost certainly miss some subtle but potentially important aspects of `power' that might be driven to undesirable states under a full maximization of $\LP$.
This exploitation-exploration tradeoff is of course similar to the one in utility maximization approaches \cite{zhuang2020consequences}. 

\section{Likely Behavioral Consequences}
\label{sec:behavior}

\subsection{Analysis of Paradigmatic Situations}
\label{sec:analysis}


Here we analyze the implications of using the robot policy $\pi_r$ as defined above in several paradigmatic toy models.
We assume here that the robot's model of human behavior is either that they are Boltzmann-rational with degree of maximization $\beta_h$ 
or that they apply an $\epsilon$-greedy policy with $\epsilon>0$.

In all examples, there are only those humans, states, and actions explicitly mentioned. Propositions are somewhat informal, Appendix \ref{app:analysis} has more formal versions and details.

\paragraph{Choosing an optimal menu size} 
While an $\klyubinE$-maximizing robot would present humans with as many action options as possible,
our robot will avoid overwhelming humans with too many options. 
This is because for very large menus, each $\GAC h{g_h}$ will decrease due to bounded rationality so much that $\IP h$ decreases despite the larger number of reachable goals.
\begin{proposition}\label{prop:menu}
    Assume $r$ must choose a number $k\ge 1$, then a single Boltzmann-rational $h$ chooses between $k$ actions, each leading to a different terminal state, and $\G_h=\{\{s'\}:s'\in\S^\top\}$.
    Then $r$ will likely choose $k\approx (e^{\beta_h}-1) / (\zeta-1)$.
\end{proposition}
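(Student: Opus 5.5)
We model the situation explicitly and then maximize the robot's objective in $k$. The robot first chooses $k$, which leads to a state $s_k$. At $s_k$ the single human $h$ picks one of $k$ actions, each leading deterministically to a distinct terminal state $s'_1,\dots,s'_k$. Since $\G_h=\{\{s'\}:s'\in\S^\top\}$, each goal $g=\{s'_i\}$ is reached only through action $i$.

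\emph{Capabilities.} For goal $\{s'_i\}$ the human is Boltzmann-rational with $Q$-values $1$ for action $i$ and $0$ for the others. So $\pi_h(\{s'_i\})(i)=e^{\beta_h}/(e^{\beta_h}+k-1)$, and by Eqn.~\eqref{GAC},
\[
\GAC h{\{s'_i\}}(s_k)=\gamma_h p_k, \qquad p_k=\frac{e^{\beta_h}}{e^{\beta_h}+k-1}.
\]
Goals for terminal states unreachable from $s_k$ contribute $0$. Goals not yet available in $s_k$ do not contribute either, by (I4). Hence by Eqn.~\eqref{IP},
\[
\IP h(s_k)=\ld\big(k\,\gamma_h^\zeta p_k^\zeta\big)=\zeta\ld\gamma_h+\ld k+\zeta\ld e^{\beta_h}-\zeta\ld(e^{\beta_h}+k-1).
\]

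\emph{Reduction to maximizing $\IP h(s_k)$.} With one human, $\PP=\IP h$ by Eqn.~\eqref{PP}, up to the sign convention, which is monotone. The trajectory from the robot's root through $s_k$ to $s'_i$ differs across $k$ in two places. The first is the $\PP$ value at $s_k$. The second is the terminal $\PP(s'_i)$, which I would argue is the same for all $i$ and $k$: terminal states reach only their own singleton goal, so $\IP h(s'_i)=0$ by (G1). Given that, $\LP(s_k)$ is strictly increasing in $\PP(s_k)$ by the monotonicity of every aggregation step in Eqns.~\eqref{PP}--\eqref{LP}, and $Q_r(s_0,k)$ is increasing in $\LP(s_k)$. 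The Boltzmann policy~\eqref{pir} therefore puts its highest probability on the $k$ that maximizes $\IP h(s_k)$. For large $\beta_r$ that probability mass concentrates there, which is what ``likely'' means in the statement.

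\emph{Optimization.} Treat $k$ as continuous and differentiate, in natural-log units:
\[
\frac{d}{dk}\Big(\ln k-\zeta\ln(e^{\beta_h}+k-1)\Big)=\frac1k-\frac{\zeta}{e^{\beta_h}+k-1}=0 .
\]
This gives $e^{\beta_h}+k-1=\zeta k$, i.e.\ $k^*=(e^{\beta_h}-1)/(\zeta-1)$. The second derivative is negative there, since the function is concave in $\ln k$ around the point: it increases like $\ln k$ for small $k$ and decreases like $(1-\zeta)\ln k$ for large $k$. So $k^*$ is the unique maximizer. The integer optimum is then a neighbour of $k^*$, hence the ``$\approx$''.

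\emph{Main obstacle.} The calculus is routine. The delicate step is the reduction, namely making precise that only $\IP h(s_k)$ varies with $k$ in $\LP$. Terminal-state contributions and the choice of $\G_h$ (goals for states unreachable from $s_k$, and the $\zeta>1$ requirement) must be shown to be $k$-independent. I would handle this by fixing the model so that all $k$ branches have isomorphic terminal structure. I would also state ``likely'' as the mode of $\pi_r$, or as the limit $\beta_r\to\infty$.
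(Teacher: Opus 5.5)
Your proposal is correct and follows the same route as the paper. You compute the per-goal Boltzmann success probability $e^{\beta_h}/(e^{\beta_h}+k-1)$, substitute it into $\IP h(s_k)$, and solve the first-order condition in $k$; the extra constant $\zeta\ld\gamma_h$ does not affect the maximizer, and your unimodality check adds rigor. Your explicit reduction from $\LP(s_k)$ to $\IP h(s_k)$ via $\IP h(s'_i)=0$ at terminal states fills a step the paper leaves implicit, though note that with one human Eqn.~\eqref{PP} gives $\PP=\xi\IP h$, a positive rescaling rather than a sign change.
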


\paragraph{Asking for confirmation} 
The robot will sometimes ask for confirmation before obeying a command because the command might have been given by mistake. 
Also, if the action is irreversible, obeying will remove $h$'s {\em subsequent} capability to revert their choice, hence delaying action leaves $h$ with decision power for longer. 
However, $r$ will {\em eventually obey} because otherwise $h$ could not attain the corresponding goal in the first place. 
Interestingly, the number of times $r$ asks back can first increase with larger mistake rate $\epsilon$ but then decrease again because it gets more likely that $h$ mistakenly fails to confirm a correct command.
A concrete example is this:
\begin{proposition}\label{prop:confirmation}
    Assume a single $\epsilon$-greedy $h$ wants $r$ to do A or B eventually.
    First, $r$ must choose an integer $k\ge 1$ by which it commits to doing A or B after $h$ has ordered it to and has confirmed the choice $k-1$ times.
    At the resulting state $s_k$, $h$ can choose A or B and is then asked for confirmation $k-1$ times in individual time steps.
    If $h$ confirms at all $k-1$ times, $r$ obeys because of the commitment, ending the game.
    Otherwise, the game returns to state $s_k$.
    Assume $r$ is perfectly impatient ($\gamma_r=0$).
    Then $r$ will likely choose $k^\ast = \argmax \gamma_h^k(1-\epsilon)^k/(1-\gamma_h^k (1 - (1-\epsilon)^k - \epsilon^k))$,
    which is increasing in $\gamma_h$.
    If $\gamma_h\approx 1$ then $k^\ast\approx \ln(1-\gamma_h)/\ln\epsilon$.
\end{proposition}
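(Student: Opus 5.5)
With $\gamma_r=0$ the sum in $\TP$ collapses to its first term, so $\LP(s')=\eta\PP(s')$, and with a single human $\PP(s')=\xi\IP h(s')$ by Eqn.~\eqref{PP}. Hence $Q_r(s_0,k)$ is a strictly increasing function of $\IP h(s_k)$. The Boltzmann policy $\pi_r$ then puts its largest probability on $\argmax_k \IP h(s_k)$; this is what ``likely'' means.

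By Eqn.~\eqref{IP}, with the two relevant goals $g_h\in\{\text{A done},\text{B done}\}$, we have $\IP h(s_k)=\ld\big(\GAC hA(s_k)^\zeta+\GAC hB(s_k)^\zeta\big)$. By the A/B symmetry of the $\epsilon$-greedy model, $\GAC hA(s_k)=\GAC hB(s_k)$. So it suffices to maximize $C(k):=\GAC hA(s_k)$ over $k$. Any other goals in $\G_h$ needed for (G1), such as singletons of non-terminal states, contribute terms that I would argue are independent of $k$ or negligible; the formal version in Appendix~\ref{app:analysis} should fix this.

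Next I compute $C(k)$ from Eqn.~\eqref{GAC}. I interpret $\epsilon$-greedy on a binary choice as giving the wrong answer with probability $\epsilon$. Under goal A, one round from $s_k$ (order plus $k-1$ confirmations) ends in A with probability $(1-\epsilon)^k$, and in B, where the goal is lost, with probability $\epsilon^k$. Otherwise the game returns to $s_k$. I will take the formal model to be such that every round lasts $k$ steps, for instance because failed rounds are padded or the confirmation queries are still run. Under that convention the Bellman recursion reads
\[
C=\gamma_h^k(1-\epsilon)^k+\gamma_h^k\big(1-(1-\epsilon)^k-\epsilon^k\big)C.
\]
Solving gives exactly the stated ratio, which yields $k^\ast$.

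For monotonicity in $\gamma_h$, I would treat $k$ as continuous and show that $\log C(k,\gamma_h)$ has strictly positive cross-derivative $\partial^2/\partial k\,\partial\gamma_h$ (supermodularity). Topkis' theorem then gives that the argmax is nondecreasing in $\gamma_h$. Writing $D=1-\gamma_h^k(1-(1-\epsilon)^k-\epsilon^k)$, we have $\log C=k\log\gamma_h+k\log(1-\epsilon)-\log D$. The first term has cross-derivative $1/\gamma_h>0$, so the remaining work is to control the $-\log D$ term. I expect this sign check to be the main obstacle, and would first try to bound it using $D\ge(1-\epsilon)^k+\epsilon^k$.

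For $\gamma_h\approx1$, I expand $\gamma_h^k\approx1-k(1-\gamma_h)$. Maximizing $C$ then amounts to minimizing
\[
\frac{\epsilon^k+k(1-\gamma_h)\big(1-(1-\epsilon)^k\big)}{(1-\epsilon)^k}+\text{const}.
\]
At the relevant scale, the trade-off is between the mistaken-confirmation term $\epsilon^k$, which decreases in $k$, and the delay term of order $k(1-\gamma_h)$, which increases in $k$. Setting the derivative to zero gives $\epsilon^k\ln(1/\epsilon)\approx1-\gamma_h$ up to factors that are polynomial in $k$. Taking logarithms, this yields $k^\ast\approx\ln(1-\gamma_h)/\ln\epsilon$ to leading order as $\gamma_h\to1$.
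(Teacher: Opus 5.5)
Your computation of $C(k)=\gamma_h^k(1-\epsilon)^k/(1-\gamma_h^kq_k)$ with $q_k=1-(1-\epsilon)^k-\epsilon^k$, including the convention of $k$-step rounds, matches the paper's proof, and so does the reduction of $r$'s choice to maximizing $C(k)$. (With $\gamma_r=0$ one actually gets $\LP=\rho\eta\PP$, which is harmless.)

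You should drop the claim that goals added for (G1) are $k$-independent or negligible. Singleton goals for the $2(k-1)$ confirmation states would add $k$-dependent terms comparable in size to the A/B terms, and they would shift the argmax. The paper simply takes $\G_h=\{\text{A},\text{B}\}$, as its formula $2^{\IP h(s_k)}=2C(k)^\zeta$ shows, and you should state that as your modeling assumption.

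\textbf{Monotonicity in $\gamma_h$.} This is the genuine gap. The paper only supports the claim numerically, and your argument stops at the sign check. The bound $D\ge(1-\epsilon)^k+\epsilon^k$ throws away the $\gamma_h$-dependence of $D$. Substituting it into the sign condition below is not enough: it fails for large $k$ with $k\ln(1/\gamma_h)\approx1$. What closes the gap is the identity $\partial_{\gamma_h}\log C=k/(\gamma_hD)$, which follows from $D+\gamma_h^kq_k=1$. It gives
\[
\partial_k\partial_{\gamma_h}\log C=\frac{D-k\,\partial_kD}{\gamma_hD^2},\qquad D-k\,\partial_kD=1-q_k\,e^{-u}(1+u)+k\gamma_h^k\,\partial_kq_k,\qquad u=k\ln\frac{1}{\gamma_h}.
\]
This is strictly positive, because $e^{-u}(1+u)\le1$, $q_k<1$, and $\partial_kq_k=-(1-\epsilon)^k\ln(1-\epsilon)-\epsilon^k\ln\epsilon>0$. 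Integrating over $[k,k+1]$ gives strictly increasing differences on the integers. Topkis' theorem then gives $k^\ast$ nondecreasing in $\gamma_h$.

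\textbf{The $\gamma_h\approx1$ asymptotics.} Your surrogate objective is off by the $k$-dependent term $-k(1-\gamma_h)$. The cause is that you dropped the factor $\gamma_h^k$ multiplying $(1-\epsilon)^k$ in $C$. As a result, at $\gamma_h=0.99$, $\epsilon=0.1$ your surrogate is minimized at $k=3$, not at the correct $k^\ast=2$. Work instead with the exact identity $1/C=1+(\epsilon^k+\gamma_h^{-k}-1)/(1-\epsilon)^k$. With $b=-\ln(1-\epsilon)$, its stationarity condition is
\[
\epsilon^k\ln\frac{1-\epsilon}{\epsilon}=\gamma_h^{-k}\ln\frac{1}{\gamma_h}+b\,(\gamma_h^{-k}-1)\approx(1-\gamma_h)(1+bk).
\]
For $\epsilon<1/2$ the left side decreases and the right side increases in $k$, so the root is the unique optimum. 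For $\epsilon\ge1/2$ one simply gets $k^\ast=1$. Taking logarithms gives $k^\ast=\ln(1-\gamma_h)/\ln\epsilon+O(\ln k)$. So your leading-order conclusion survives, and this derivation supplies what the paper only asserts.
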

E.g., with $\gamma_h=0.99$ and $\epsilon=10\%$, we get $k^\ast = 2$. 
By contrast, an $\klyubinE$-maximizing robot would not worry about human mistakes and thus see no reason to ask for confirmation. 
If it uses a discounted version of channel capacity as in \citet{myers2024learning}, it would obey right away.

\paragraph{Making commitments}
If given the possibility, the robot will make binding commitments that transparently restrict its future behavior in certain circumstances, e.g., after certain human actions such as giving commands. This is because a human can plan their behavior better when knowing how the robot will react, which increases their goal-attainment capabilities.
An extreme example of this is the following:
\begin{proposition}\label{prop:commitments}
    Assume $r$ might bindingly commit in $s_0$ to any pure policy, 
    and a single, perfectly rational $h$ whose beliefs $\pi^0_r$ about $r$'s behavior attach a positive probability to any action $r$ has not ruled out. 
    Assume $h$'s possible goals do not distinguish whether $r$ commits or not.
    
    Then if $\beta_r=\infty$, it is weakly (and generically also strictly) optimal for $r$ to commit in $s_0$ to the pure policy that would maximize $\LP$ in the subgame without commitment.
\end{proposition}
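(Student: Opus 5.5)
Let $\sigma^\ast$ denote the pure robot policy that maximizes $\LP$ in the subgame without commitment (with $\beta_r=\infty$), and let $s_c$ denote the state reached from $s_0$ when $r$ commits to $\sigma^\ast$.

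The plan is a revealed-preference comparison between two continuation subgames reachable from $s_0$: the one where $r$ makes no commitment, and the one where $r$ commits to $\sigma^\ast$. With $\beta_r=\infty$, the policy \eqref{pir} puts all mass on actions maximizing $Q_r(s_0,\cdot)=-\ld\E 2^{-\LP(s')}$. Since $h$ acts only afterwards, the commitment choice at $s_0$ leads deterministically to the respective continuation state. It therefore suffices to show $\LP(s_c)\ge \LP(s_{nc})$, strictly in the generic case, where $s_{nc}$ is the no-commitment state.

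The first step fixes the robot's behavior. In the no-commitment subgame, $r$ plays $\sigma^\ast$ by definition. In the committed subgame, $r$ also plays $\sigma^\ast$, because that is the only action it has left. The two subgames therefore have the same robot behavior and the same state dynamics; they differ only in what $h$ believes. The assumption that the possible goals do not distinguish commitment from non-commitment lets us identify states of the two subgames pairwise and use the same goal sets $\G_h$ in both.

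The second step compares the human's behavior. In the no-commitment subgame, $h$ best-responds to the belief $\pi^0_r$, which puts positive probability on every action $r$ has not ruled out. In the committed subgame, $h$ knows that $r$ will play $\sigma^\ast$. For each goal $g_h$, a perfectly rational $h$ who knows the robot's true policy maximizes the actual discounted attainment probability. Hence the $\GAC h{g_h}$ computed from \eqref{GAC} under the true dynamics $\sigma^\ast$ satisfies, at every state, committed value $\ge$ non-committed value. I will argue this by backward induction on the acyclic game: at each human decision node, the committed $h$ picks an action that is optimal under the true continuation values, while the non-committed $h$ picks some action whose true value is at most that optimum.

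The third step propagates this dominance through the metric hierarchy. By the monotonicity properties (I2), (P2), (T2) and (L2), pointwise larger $\GAC{}{}$ gives pointwise larger $\IP h$ and $\PP$ along every state. However, the state distribution itself also changes, because $h$'s actions differ between the subgames. To handle this, I will use a second backward induction for $\LP$ via the recursion \eqref{LPrecursive}, and more generally via the expectation form \eqref{LP}.

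This last point is the main obstacle. A rational $h$ maximizes goal attainment goal by goal, not the aggregate $\LP$, so a better $\GAC{}{}$ for the realized goal does not obviously yield a better state distribution for $\LP$. I will first try to show that the relevant $\PP$ values along the trajectory are functions of the $\GAC{}{}$ vectors at states after $h$ acts, and that in the committed subgame these vectors dominate those of the non-committed subgame for every realized goal. Here $\pi_h$ is averaged over goals, as in (C1). If that fails, I will fall back on the observation that $\sigma^\ast$ already maximizes $\LP$ among pure policies under $\pi^0_r$-based beliefs, and that removing belief mass on suboptimal robot actions cannot hurt. For genericity, strictness follows whenever $\pi^0_r$ puts positive mass on a deviation that changes some best response of $h$, which holds except on a measure-zero set of transition probabilities.
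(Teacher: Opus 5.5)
\textbf{Verdict: there are genuine gaps.} Your step 2 is the core of the paper's proof. With $r$ playing the same pure policy in both continuations and states identified pairwise, a perfectly rational, informed $h$ maximizes the true discounted attainment probability. So every $\GAC h{g_h}$ is weakly larger at corresponding states.

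\textbf{First gap: other commitments.} The reduction ``it therefore suffices to show $\LP(s_c)\ge\LP(s_{nc})$'' is wrong. With $\beta_r=\infty$, optimality at $s_0$ also requires $\LP(s_c)\ge\LP(s^\dagger)$, where $s^\dagger$ is the state reached by committing to \emph{any other} pure policy $\pi^\dagger_r$. You never make that comparison. The paper attempts it via a subgame $\Gamma^\dagger$, but $\GAC{}{}$-dominance cannot deliver it. The paper's ``in the same fashion'' step needs a human policy in $\Gamma^\dagger$ to have the same value as its image in the $\sigma^\ast$-committed subgame. That identity holds only when both continuations use the same robot policy.

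The claim can in fact fail. $Q_r(s,a_r)$ depends only on the successors' $\LP$, so the $\beta_r=\infty$ policy ignores how a later robot action changes $\GAC{}{}$, and hence $\PP$, at earlier states. For example, at the post-commitment root let $h$ either:
\begin{itemize}
\item move R and then pick one of two terminal goals $g_1,g_2$, or
\item move L to a robot node $u$, where $r$ chooses between a terminal state in a third goal $g_3$ and a state from which $h$ can again pick $g_1$ or $g_2$.
\end{itemize}
For small $\gamma_r$, the uncommitted $r$ at $u$ prefers the two-option state. Committing to $g_3$ at $u$ instead makes three rather than two goals attainable from the root, each with $\GAC h{g_h}=\gamma_h^2$, and so gives strictly larger $\LP$ there. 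Reading $\sigma^\ast$ as the global maximizer of $\LP$ under $\pi^0_r$-based behavior does not rescue this. A variant in which a policy's benefit only materializes when $h$ knows it (so the uninformed $h$ never uses it) defeats that reading too.

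\textbf{Second gap: from $\GAC{}{}$ to $\LP$.} You rightly notice that $\LP$ is an expectation over trajectories whose distribution depends on $h$'s goal-averaged behavior, which differs between the two continuations. So pointwise dominance of $\IP h$ and $\PP$ at corresponding states does not give $\LP(s_c)\ge\LP(s_{nc})$ via (L2). Neither of your remedies closes this:
\begin{itemize}
\item Your first plan only re-derives pointwise dominance and says nothing about how trajectories are reweighted.
\item Your fallback (``removing belief mass on suboptimal robot actions cannot hurt'') is the claim itself.
\end{itemize}
The paper does not close it either; it simply asserts monotone propagation ``by induction''. The step can genuinely fail. Suppose the informed $h$ reaches some goal only through a long corridor of low-$\PP$ states, whereas the uninformed $h$, hoping for help from $r$, would have passed through high-$\PP$ states. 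Then for $\gamma_r$ near $1$ the extra low-power terms in $\TP$ can outweigh the tiny gain in $\PP$ at the root.

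\textbf{What you can prove.} Your steps 1--2 rigorously establish the comparison with not committing when $\gamma_r=0$, as assumed in Proposition~\ref{prop:confirmation}. Then $\LP(s)=\rho\eta\PP(s)$, and root dominance of every $\GAC h{g_h}$ suffices by (I2) and (P2). Both the full optimality claim and the case $\gamma_r>0$ need additional hypotheses.
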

By contrast, an $\klyubinE$-maximizing robot would not model human beliefs and thus see no reason to make commitments. 

\paragraph{Preventing destruction but allowing being paused}
A classical question in the theoretical AI safety literature is whether an AI will allow humans to pause or destroy them.
Our robot will likely allow being paused unless it thinks this will disempower humans that rely on it too much temporarily.
But it will also likely prevent being destroyed as this would make the disempowerment permanent.
\begin{proposition}\label{prop:pause}
    Assume $\Gamma$ factorizes into a part $\Gamma^1$ where a single $h$ can attain various goals with or without the help of $r$, and a part $\Gamma^2$ where $r$ can enable or disable a pause button P and a destroy button D and $h$ can toggle P and press D (see Appendix \ref{app:pause}).
    If $r$ is active, $h$ has a baseline power of $y>0$ in $\Gamma^1$ due to $r$'s assistance, otherwise a smaller baseline power of $x$. 
    In $\Gamma^2$, the possible goals are to pause and destroy, unpause and destroy, pause and not destroy, or unpause and not destroy $r$.
    The robot assumes $h$ will pause $r$ with probability $p$ whenever possible, and will destroy $r$ with probability $q$ whenever possible, where $p+q<1$. 
    
    Then for most parameter combinations, $r$ will disable D, and it will enable P unless $p$, $q$, or $y-x$ are too large.
\end{proposition}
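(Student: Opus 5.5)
The plan is to exploit the product structure $\Gamma=\Gamma^1\times\Gamma^2$ together with the single-human composition additivity built into the metrics. With one human and conjunctive goals, Proposition \ref{prop:IP} gives $\IP h(s^1,s^2)=\IP h(s^1)+\IP h(s^2)$. However, $\Gamma^1$ and $\Gamma^2$ are \emph{not} fully independent: whether $r$ is active (unpaused, undestroyed) in $\Gamma^2$ sets $h$'s baseline in $\Gamma^1$ to $y$ or $x$. I would therefore write the present power as $\PP(s)=\IP h(s)=b(s^2)+\IP h^{(2)}(s^2)$. Here $b\in\{x,y\}$ is the $\Gamma^1$ contribution determined by $r$'s status. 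The second term is the $\ld$ of the sum of $\GAC h{g_h}(s^2)^\zeta$ over the four $\Gamma^2$ goals, computed from Eqn.~\eqref{GAC} under the stated behaviour model (pause with probability $p$, destroy with probability $q$ whenever possible). With $|\H|=1$, Eqn.~\eqref{PP} reduces to $\PP=\IP h$.

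I then enumerate the four robot configurations: (P,D) each enabled or disabled. For each I compute the stationary dynamics of $\Gamma^2$ as a small Markov chain with states active, paused, and destroyed. Destroyed is absorbing with baseline $x$. Paused has baseline $x$, but $h$ can unpause. The goal-attainment capabilities follow from Eqn.~\eqref{GAC} as discounted hitting probabilities. For example, with D enabled the ``not destroy'' goals are attained only with probability bounded by the chance of never pressing D. I expect geometric-series expressions in $\gamma_h$, $p$, $q$. Plugging the per-period $\PP$ values into Eqn.~\eqref{LP}, preferably via the recursion \eqref{LPrecursive} with $\rho=1$ to keep it tractable, gives $\LP$ for each configuration. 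The Boltzmann policy then makes $r$ likely to pick the configuration with the largest $Q_r$.

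The comparisons split into two parts. \emph{D:} enabling D adds option value to $\Gamma^2$ goals, but it risks permanent transition to baseline $x$. The $\eta$-inequality aversion in $\TP$ penalizes this permanent low-power tail, and the $\rho$-risk aversion penalizes the resulting spread across trajectories. So disabling D wins unless $y-x$ is tiny, which gives ``most parameter combinations''. \emph{P:} enabling P increases $\IP h^{(2)}$ by roughly the log of doubling the reachable pause-goals. The loss is only temporary, since an expected fraction of time proportional to $p$ is spent at $x$ instead of $y$, and this is reversible because $h$ can unpause. Hence P is enabled iff the gain exceeds a loss increasing in $p$, $q$, and $y-x$. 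That yields the threshold statement.

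The main obstacle is that the exact closed forms are messy. The claim is qualitative (``most parameter combinations'', ``too large''), so I would not solve them in general. Instead, I would establish the sign of each $Q_r$ difference in limiting regimes: $p,q\to0$; $y-x\to0$ versus $y-x$ large; and $\gamma_r\to1$. I would then argue by continuity that the signs persist on open parameter regions. Any residual cases I would check numerically and state informally, consistent with the informal phrasing of the proposition.
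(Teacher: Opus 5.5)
Your overall route matches the paper's: a small chain over active, paused and destroyed states with the button configurations, then $\rho=1$ so that Eqn.~\eqref{LPrecursive} turns each stationary robot choice into a linear system, then a comparison of the resulting values in limiting regimes plus numerics. The genuine problem is how you obtain the $\Gamma^2$ power. You compute $\GAC h{g_h}$ ``under the stated behaviour model''. For example, you bound the not-destroy goals by the chance of never pressing D. But Eqn.~\eqref{GAC} evaluates each goal under the goal-conditioned policy $\pi_h(g_h)$. A human pursuing a not-destroy goal has no reason to press D, and one pursuing a pause goal presses P. The prediction ``pause w.p.\ $p$, destroy w.p.\ $q$'' is goal-averaged behaviour. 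It enters only the transitions inside the trajectory expectation of $\LP$. Read this way, each enabled button adds one reliably controlled binary choice, which is exactly what the paper uses:
\begin{itemize}
\item destroyed, paused with P, paused with P and D: $\IP h=x,\;x+1,\;x+2$;
\item active with no button, with P, with P and D: $\IP h=y,\;y+1,\;y+2$.
\end{itemize}
Your later heuristic that P is worth ``roughly one bit'' silently relies on this reading and contradicts your own recipe. Executed literally, your recipe gives the pause goal from an active state a capability of only $\gamma_h p/(1-\gamma_h(1-p))$. By reliability preference ($\zeta>1$), enabling P is then worth only $O(p^\zeta)$ bits as $p\to 0$. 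The expected baseline loss, however, stays of order $p$ once $y-x$ exceeds about one bit. So for small $p$ you would conclude that $r$ disables P, which is the opposite of the claim.

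Your criterion for D is also misidentified. Enabling D costs the not-destroy goals nothing, so it is worth a full extra bit per period ($y+2$ versus $y+1$). The trade-off against permanent destruction is governed mainly by $q\gamma_r$ versus $1-\gamma_r$, not by $y-x$ alone. For large $x,y$ the paper finds that enabling both buttons beats enabling only P iff $q\gamma_r\bigl(1+(y-x)(1-\gamma_r p)\bigr)<1-\gamma_r$. Two consequences follow:
\begin{itemize}
\item Even as $y-x\to 0$, D is disabled once $q\gamma_r>1-\gamma_r$, because destruction forfeits the two button-bits forever.
\item Conversely, if $q$ is small compared with $(1-\gamma_r)/(1+y-x)$ and $p(y-x)$ is not too large, both buttons stay enabled.
\end{itemize}
So ``most parameter combinations'' reflects $\gamma_r$ being close to $1$, not $y-x$ being non-tiny. (Minor: with $|\H|=1$, Eqn.~\eqref{PP} gives $\PP=\xi\IP h$, not $\IP h$; this only rescales $\eta$.)
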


\subsection{Further Emergent Behaviors}

Here we hypothesize without formal analysis that the $\LP$-based robot policy $\pi_r$ would typically involve several additional emergent behaviors.

\paragraph{Allowing or preventing human harm}
If the robot cares for $h$'s future power ($\gamma_r > 0$) and can provide $h$ with the means to harm themselves, 
$r$ will trade off the temporary power increase from having these means against the possible later disempowerment from harm. 

So $r$ will provide the means only if it believes that $h$ will most probably not actually harm themselves.
Because of its interpersonal power inequality aversion, $r$ will even less likely provide $h$ with the means to harm {\em others.}

\paragraph{Following social norms}
The robot will tend to follow human social norms that generally foster goal attainment. 
This is because if $r$ correctly models most $h$ as expecting most others to follow the norm, 
it will thus expect those $h$ to also follow the norm since that makes goal attainment more likely for most $h$ and $g_h$. 
Thus $r$ will itself follow the norm to prevent reducing those $h$'s power from harm or mis-coordination.

A simple $\klyubinE$-based robot would instead ignore norms. 

\paragraph{Resource allocation}
If the robot can split a total amount $M$ of resources between $h_1$ and $h_2$, 
and for $h_i$ to have resources $m$ translates into having a power of $\IP h = f(m)$ bits,
then $r$ will generally prefer an equal split, at least if $f$ is linear\footnote{%
    E.g., a linear $f$ seems plausible if $M$ is money that can be spent for paying others to make independent choices in one's favor.
    }, 
concave, or not too convex.
The robot will only prefer an unequal split or even full resource concentration if $f$ is so convex that $2^{-\xi f(m)}$ loses its convexity.

\paragraph{Inadvertent power seeking}
While increasing $h$'s power, $r$ might inadvertently acquire even more power for itself than for $h$. 
E.g., if the robot does R\&D and tells $h$ its findings, some of them might only by comprehensible to $r$ and thus only directly useful for $r$ but not for $h$.\footnote{%
    It is of course debatable whether a power-seeking robot is a problem if it does not imply human disempowerment but rather correlates with increasing human empowerment.}

\paragraph{Manipulating mutual expectations}
The robot might choose to make humans have incorrect beliefs about each others' behavior in situations where $r$ can do so and where correct beliefs would lower goal attainments capabilities.
This could happen in coordination games or dilemmas where most strategic equilibria are bad and most ``social optima'' (in terms of total power $\PP$) are far from strategic equilibrium.

\section{Conclusion}
\label{sec:conclusion}

We have started with the hypothesis that focusing on human empowerment preservation might constitute a safe and beneficial approach to AI decision making,
where an AI system could use metrics of human power to sustainably increase human power and fairly manage its distribution across individuals and time.   
To derive such human power metrics---individual and aggregate, present and long-term---we have started from the basic conception of human power as the capability to attain a wide variety of goals.
We have then used axioms and desiderata similar to those used in decision theory, social choice theory, and welfare theory to justify specific functional forms and parameter ranges.
In doing so, we have focused on mostly structural aspects that can be derived from possible states of the world, behavioral options, their likely consequences, and sets of possible goals humans might conceivably pursue that span the whole state space. 
All of this would be provided by a suitable world model the training of which is of course difficult but out of scope of this article.

We have shown in several ways how the resulting metrics differ from and relate to another popular structural approach to empowerment based on \citeauthor{klyubin2005empowerment}'s [\citeyear{klyubin2005empowerment}] information-theoretic metric $\klyubinE$ \cite{salge2017empowerment}.
We have argued that, other than $\klyubinE$, our individual power metric $\LP(s)$ takes into account goal-attainment and existing models of human bounded rationality, beliefs, and social norms.
As a consequence, an $\LP$-based robot will not overwhelm humans with too many options, will sometimes ask for confirmation before taking irreversible actions, will make binding commitments that allow humans to correctly anticipate its behavior, and will tend to follow social norms.  

The objective to softly maximize the aggregate human power metric used here seems to give the AI system many desirable incentives---some directly baked into the metric, others emergent---but also some maybe less desirable incentives.
Our results let us hypothesize that such an agent would 
\begin{itemize}
    \item act as a transparent instruction-following assistant by making conditional commitments, respecting human social norms, proactively removing obstacles and opening up new pathways, and getting out of the way,
    \item adapt to human bounded rationality by offering a large but not overwhelming number of options, and considering well whether to offer potentially harmful options,
    \item be corrigible and hesitant to cause irreversible change by asking for confirmation a suitable number of times, 
    \item improve communication possibilities between itself and humans, to be able to ask for confirmation and make commitments,
    \item manage resources fairly and sustainably, 
    \item protect its own existence and functionality,
\end{itemize}
Our intuition is that it would also aim to improve human individual and collective decision making by providing useful information, reducing uncertainty, teaching humans useful skills, moderating conflicts fairly, etc.

Further potentially desirable behaviors would require additional tweaks.
E.g., reducing human dependency on the system or protecting them from system failure could be incentivized by forcing the system to assume that it will turn into a uniformly randomizing or even power-{\em min}imizing agent with some small probability rate (see Appendix \ref{app:defunct}).

Other emergent phenomena include potential strategic manipulation of human beliefs, sometimes refusing to be destroyed or even paused, a potential increase in inequality between the power of individual humans and AI systems, and a redistribution of power between humans or between time points (similar to what can happen in welfare maximization approaches). 
As these effects only occur when they increase aggregate human power as measured by $\LP$, it is not clear whether they should be considered undesirable or not.

Some trade-offs can be adjusted via the parameters $\gamma_h, \gamma_r, \zeta, \xi, \eta, \rho$, and $\beta_r$.
Smaller $\gamma_h$ focuses on short-term attainable goals.
Smaller $\gamma_r$ focuses on near-term empowerment.
Larger $\zeta$ focuses on more reliably attainable goals (where $\zeta=2$ has appealing formal properties).
Larger $\xi$ protects the least empowered more.
Larger $\eta$ leads to a smoother power trajectory along time.
And larger $\rho$ incentivizes reducing uncertainty ($\rho=1$ allows for recursive computation).

Other effects might be mitigated by adding regularizers to the system's intrinsic reward such as the Shannon divergence between human beliefs and $\pi_\H$ to disincentivize lying about others' likely behaviors and incentivize the robot to share its model of human behavior with the humans.

Future research should study the effects of the specific functional forms we derived and their parameters.
Some of our posited axioms represent what might seem as extreme desiderata, especially regarding the rather large level of power inequality aversion (e.g., Corollary \ref{cor:PP}), but we believe they can be motivated by the idea of {\em minimal individual rights} \cite{pattanaik1996individual}.
We also did not investigate individual humans' capability to influence {\em other} humans' goal-attainment capability and power here. This should be studied by a judicious combination of a rationality-based game-theoretic analysis and empirical work acknowledging human bounded rationality.

In a follow-up paper, we plan to develop scalable algorithms for estimating the power metrics in situations where the world model is too large to allow for exact computation by backward induction.
This will profit from the recursive nature of $\GAC h{g_h}$ and $\LP$ (at least if $\rho=1$) and the occurrence of expected values and sums that allow for stochastic approximation.
This might eventually enable a crucially needed assessment of the resulting behavior in large, safety-critical, multi-agent environments with real human subjects. 

Most importantly, a thorough, independent red-teaming of the whole approach is called for, including the other necessary components of such an AI system.
E.g., one might imagine fault scenarios relating to the training process of the world model, the decision about its level of granularity, and the choice of potential goals, all of which could lead to ``convenient'' but inaccurate world models and goal sets and thus to ``wishful thinking'', particularly regarding the human behavior model. 
A related issue is that world models used for decision-making will unavoidably sometimes have to make performative predictions \cite{perdomo2020performative}, which might be addressed by employing counterfactual querying \cite{bengio2026safety,heitzig2026unbiased}.
Similarly, the fact that a robot policy based on our power metrics will influence the effective reward of the robot can be seen as an instance of performative reinforcement learning \cite{mandal2023performative}. 
In very large contexts, issues with population ethics and the identification of who counts as human are likely to arise similar to those in welfare theory.

Even though this paper was motivated by exploring the hypothesis that even a completely goal-agnostic robot might be useful, it is of course a natural idea to explore a middle ground between this approach and the Assistance Games approach where the robot starts ignorant of human goals but forms beliefs about them via Bayesian updating. 
Such an emerging model of actual human goals could easily be incorporated into our metrics by replacing the straight sum over possible goals by an expected value w.r.t.~a changing prior over goals. By adjusting the learning (and unlearning) rates of the goal model update process, one might thus interpolate between a purely empowering agent and a pure assistant.

Despite these open questions, 
we believe that our analyses already suggest that 
highly capable general-purpose AI systems whose decisions are explicitly based on managing human power---using metrics like those derived in this paper---might be a safer and still very beneficial alternative to systems based on some form of extrinsic reward maximization.

We propose that, similar to the debate between traditional welfare theorists and proponents of the capability approach to welfare,
the AI ethics and safety community should debate whether AI agents should follow traditional, extrinsic reward-based policies or policies based on human power metrics. 

Our paper combines the capability framing with concepts from traditional welfare theory by reusing many of its axioms and results from the utility context in the power context, helping to ground the called-for debate in formal analysis. 

\appendix


\section*{Acknowledgments}
We thank eight anonymous referees for their helpful comments.

\bibliographystyle{named}
\bibliography{allrefs}


\clearpage

\begin{strip}
\begin{center}
  {\LARGE\bfseries \mytitle}\\[8mm]
  {\LARGE\bfseries Technical Appendix}\\[15mm]
\end{center}
\end{strip}

\section{Proofs, Related Notes, Conjectures}
\label{app:proofs}

\subsection{Individual-Level Power Metrics}

\paragraph{Goal-attainment capability \boldmath $\GAC h{g_h}(s)$}
\begin{proof}[Proof of Proposition \ref{prop:GAC}]
\def\Law{\mathrm{Law}}
Let us abbreviate $(\pi_r,\pi_h(g_h),\pi_{-h})$ by $\pi$.

From (C1,3,4,5,6), we get $\GAC h{g_h}(s) = 1_{s\in g_h} + 1_{s\notin g_h\cup\S^\top} \Phi(\Law(\GAC h{g_h}(s')|s'\sim \trans(s,\pi)))$ with continuous and strictly increasing $\Phi$.

(C2+3+4) then imply that $0\le\Phi\le 1$.

Because we require a single $\Phi$ across all games, humans, goals, and states, the probabilities occurring in $\Law(\GAC h{g_h}(s')|s'\sim \trans(s,\pi)))$ can be all real numbers in $[0,1]$.

Because of (C2), for a deterministic transition to a state from which the goal is unreachable, that probability distribution is the delta measure $\delta_0$ that concentrates all probability on the value $0$, and $\Phi(\delta_0)=0$.  

Because of (C3), for a deterministic transition to a goal state, that probability distribution is the delta measure $\delta_1$ on the value $1$, and $\gamma_h:=\Phi(\delta_1)\in(0,1)$.

Continuity and strict monotonicity of $\Phi$ thus implies that the range of $\Phi$ is the interval $D := [0,\gamma_h]$.

Hence the range of $\GAC h{g_h}$ includes at least $D' := D\cup\{1\}$. 

Hence the domain of $\Phi$ includes all probability distributions on $D'$ with finite support.
Let's denote that set by $\Delta(D')$.

We introduce a complete and transitive ordering $\preceq$ on $\Delta(D)$ via putting 
\begin{align}
    \ell_1\preceq\ell_2 &:\Leftrightarrow \Phi(\ell_1)\le\Phi(\ell_2). \label{preceq}
\end{align}

Our independence axiom then implies $\preceq$ is ``independent'' in the usual decision-theoretic sense: 
If $\ell_1,\ell_2\in\Delta(D)$ and $\ell_1(x)=\ell_2(x)$, the ordering $\ell_1 \preceq \ell_2$ does not depend on $x$
in the sense that if $\ell'_1,\ell'_2$ equal $\ell_1,\ell_2$ except that the mass on $x$ is moved to some other value $x'$, we still have $\ell'_1 \preceq \ell'_2$.
In other words,
\begin{align}
    \alpha\mu_1 + (1-\alpha)\delta_x \preceq \alpha\mu_2 + (1-\alpha)\delta_x
    &\Leftrightarrow \mu_1 \preceq \mu_2
\end{align} 
for all $\alpha\in[0,1]$, $\mu_1,\mu_2\in\Delta(D)$, and $x\in D$.
By induction, this implies that 
\begin{align}
    \alpha\mu_1 + (1-\alpha)\rho \preceq \alpha\mu_2 + (1-\alpha)\rho
    &\Leftrightarrow \mu_1 \preceq \mu_2
\end{align} 
for all $\alpha\in[0,1]$ and $\mu_1,\mu_2,\rho\in\Delta(D)$.

Because $\preceq$ is defined for lotteries on an interval of reals and is continuous, strictly increasing, and ``independent'', 
the von-Neumann--Morgenstern theorem implies that there is a continuous and strictly increasing real-valued function $f^G$ on $D$ 
such that 
\begin{align}
    \Phi(\ell_1)\le\Phi(\ell_2) &\Leftrightarrow \E_{x\sim\ell_1}f^G(x)\le\E_{x\sim\ell_2}f^G(x).
\end{align}
In particular,
\begin{align}
    \Phi(\ell_1)=\Phi(\ell_2) &\Leftrightarrow \E_{x\sim\ell_1}f^G(x)=\E_{x\sim\ell_2}f^G(x),
\end{align}
hence $\Phi(\ell) = F^G(\E_{x\sim\ell}f^G(x))$ for some continuous and strictly increasing $f^G,F^G$, as claimed.
\end{proof}

\paragraph{Aggregation across goals: individual human power \boldmath $\IP h(s)$}
\begin{proof}[Proof of Proposition \ref{prop:IP}]
Because we want this to hold across all games, which can have different values of $\gamma_h$, 
the previous result implies that $\GAC h{g_h}$ can take any value in $[0,1]$.
Note that by assumption, $\G_h$ covers all of $\S$, hence $\GAC h{g_h} > 0$ for at least one $g_h$.\footnote{%
    In fact, without this assumption, axiom (I6) would require $F^I\equiv 0$, 
    so the covering assumption is crucial.}

Then (I1--3) state that there is a symmetric, continuous, and strictly increasing real-valued function $\Phi$ defined on 
$\bigcup_{n\ge 1}([0,\gamma_h]^n\setminus\{0\}^n)$ so that $\IP h(s) = \Phi(\GAC h{g_h^1},\dots,\GAC h{g_h^n})$ for any ordering $g_h^i$ of the elements of $\G_h$,
and $\Phi$ fulfills the ordinal independence axiom.

Let $\Phi_n$ be the restriction of $\Phi$ to $[0,\gamma_h]^n\setminus\{0\}^n$.
A classical result by \citet{debreu1959topological} then directly implies that 
$\Phi_n(x_1,\dots,x_n) = F^I_n(\sum_{i=1}^n f^I_n(x_i))$ with continuous and strictly increasing $f^I_n$, $F^I_n$.\footnote{%
    Note that this is similar to the argument for separability of $\GAC h{g_h}$ but structurally slightly different since now $\Phi$ is defined on finite sequences rather than probability distributions. So while for $\GAC h{g_h}$, we needed a result from decision theory, here we needed one from welfare theory.}

(I4) implies that $\Phi(x_1,\dots,x_n,0) = \Phi(x_1,\dots,x_n)$ for all $x_1,\dots,x_n\in [0,\gamma_h]$,
which implies that $F^I_n$ and $f^I_n$ are the same for all $n$ and that $f^I_n(0)=0$.
Let us abbreviate these common $f^I$, $F^I$ by just $f$, $F$.

Put $\tilde f(x) = -f(-x)$, which is still continuous and strictly increasing.

(I5) implies:
If $x_1 < x'_1$, $\delta>0$, $x_2 = x_1 - \delta$, $x'_2 = x'_1 + \delta$, but still $x_2\le x'_2$,
then $f(x_2) + f(x'_2) > f(x_1) + f(x'_1)$.

By substituting $x = -y$, this is equivalent to:
If $y_1 > y'_1$, $\delta>0$, $y_2 = y_1 + \delta$, $y'_2 = y'_1 - \delta$, but still $y_2\ge y'_2$,
then $\tilde f(y_2) + \tilde f(y'_2) < \tilde f(y_1) + \tilde f(y'_1)$.
This is the Pigou--Dalton axiom in its traditional form.
As this must hold for all $y\in[-1,0]$, another classical result by \citet{dasgupta1973notes} implies that $\tilde f$ must be strictly concave,
hence $f$ is strictly convex as claimed.

(I6) now implies:
If $\sum_i f(x_i) < \sum_i f(x'_i)$ and $\gamma\in(0,1)$,
then $\sum_i f(\gamma x_i) < \sum_i f(\gamma x'_i)$.

If $x=0$ was not a possible value, this Cauchy-type functional equation would have two different types of solutions (see for example \citet{roberts1980possibility}):

(i) $f(x) = \alpha (x^\zeta + \beta)$ with $\zeta>0$, $\alpha>0$, and some $\beta$.

(ii) $f(x) = -\alpha (x^{-\zeta} + \beta)$ with $\zeta>0$, $\alpha>0$, and some $\beta$.

(iii) $f(x) = \alpha \log x + \beta$ with $\alpha>0$ and some $\beta$.

As $\GAC h{g_h}=0$ is possible for some $g_h$ (just not for all), $x=0$ is possible.
As (ii) and (iii) are undefined for $x=0$, we're left with (i).

As unreachable goals have $x=0$, (I4) implies $f(x)=0$ and thus $\beta=0$.

Strict convexity then implies $\zeta>1$ as claimed. 

~

\noindent
Now assume (I7).
From the previous proposition and the independence of $\Gamma^1$ and $\Gamma^2$, 
we know that $\GAC h{g^1_h\times g^2_h} = \GAC h{g^1_h} \GAC h{g^2_h}$,
hence 
\begin{align}
    &F\left(\sum_{g^1_h} \sum_{g^2_h} f(\GAC h{g^1_h} \GAC h{g^2_h})\right) \\
    &= F\left(\sum_{g^1_h} f(\GAC h{g^1_h})\right) + F\left(\sum_{g^2_h} f(\GAC h{g^2_h})\right)
\end{align}
Because this applies across games, and $f(x) = \alpha x^\zeta$, we have
\begin{align}
    F\left(\alpha \sum_{i=1}^n x_i^\zeta \sum_{j=1}^m y_j^\zeta\right)
    &= F\left(\alpha \sum_{i=1}^n x_i^\zeta\right) + F\left(\alpha \sum_{j=1}^m y_j^\zeta\right)
\end{align}
for all $n,m\ge 1$ and $x\in[0,1]^n\setminus 0$, $y\in[0,1]^m\setminus 0$.
Hence $F(\alpha XY) = F(\alpha X) + F(\alpha Y)$ for all $X,Y>0$.
This implies that $F(y) = \lambda\ld\frac{y}{\alpha}$ as claimed.

Putting $f$ and $F$ together, we finally get
$\Phi(x_1,\dots,x_n) = \lambda\ld\frac{\alpha\sum_i x_i^\zeta}\alpha$, hence
$\IP h(s) = \lambda \ld \sum_{g_h\in\G_h} \GAC h{g_h}(s)^\zeta$.
\end{proof}

Note that this functional form is analogous to certain ``non-expected'' utility theories, 
in particular to rank-dependent utility theory with the probability-weighting function $w(p)=p^\zeta$ \cite{quiggin1982theory}.

\paragraph{Relationship to Klyubin's `empowerment' metric \boldmath $\klyubinE$}
\begin{proof}[Proof of Proposition \ref{prop:klyubin}]
Assume a multi-armed bandit environment with a single player $h$ (hence dropping the subscript ``$h$'' below)
and $k$ possible outcomes $s\in\S^\top$.
We are going to show that if we let the goal set equal the singleton outcomes, $\G = \{\{s\}:s\in\S^\top\}$, 
and assume the player is fully rational,
then the (state-)entropy-regularized version of `empowerment' and our individual power metric $\IP{}$ fulfill the inequality
\begin{align*}
    \klyubinE^\zeta &= \max_\pi \big(\MI_\pi(a;s) - (\zeta-1)\entropy_\pi(s|a)\big) \\
    \le \IP h &= \ld \sum_s \max_a \Pr(s|a)^\zeta.
\end{align*}
Also, $\IP h$ and $\klyubinE^\zeta$ share their range $[-\ld k,\ld k]$.    
Let's define 
\begin{align*}
    p_{as} &= \Pr(s|a), \\
    q_s &= \max_a p_{as} / Z, & 
    Z &= \sum_s \max_{a} p_{as}, \\
    y_s &= q_s^\zeta / Y, & 
    Y &= \sum_s q_s^\zeta \le 1.
\end{align*}
Consider any $\pi\in\Delta(\A)$ and use the shortcuts $\pi_a=\pi(a)$ and $p_s = \sum_a \pi_a p_{as}$.
Then the Kullback--Leibler divergence 
\begin{align*}
    \DKL(p_{a\cdot} || q)
    &= \sum_s p_{as} \ld\frac{p_{as}}{q_s} \le \sum_s p_{as} \ld Z = \ld Z
\end{align*}
and thus
\begin{align*}
    &\MI_\pi(a;s) - (\zeta-1)\entropy_\pi(s|a) \\
    &= \zeta\MI_\pi(a;s) - (\zeta-1)\entropy_\pi(s) \\
    &= \zeta\sum_a \pi_a \sum_s p_{as} \ld\frac{p_{as}}{p_s} - (\zeta-1)\entropy_\pi(s) \\
    &= \zeta\sum_a \pi_a \sum_s p_{as} \ld\frac{p_{as}}{q_s}\frac{q_s}{p_s} - (\zeta-1)\entropy_\pi(s)\\
    &= \zeta\sum_a \pi_a \sum_s p_{as} \ld\frac{p_{as}}{q_s} - \zeta\sum_s \sum_a \pi_a p_{as} \ld\frac{p_s}{q_s} \\
    &\qquad\qquad - (\zeta-1)\entropy_\pi(s) \\
    &= \zeta\sum_a \pi_a \DKL(p_{a\cdot} || q) - \zeta\sum_s p_s \ld\frac{p_s}{q_s} \\
    &\qquad\qquad  + (\zeta-1)\sum_s p_s\ld p_s \\
    &= \zeta\sum_a \pi_a \DKL(p_{a\cdot} || q) + \sum_s p_s \ld q_s^\zeta \\
    &\qquad\qquad - \zeta\sum_s p_s \ld p_s \\
    &\qquad\qquad  + (\zeta-1)\sum_s p_s\ld p_s \\
    &\le \zeta\sum_a \pi_a \ld Z - \sum_s p_s \ld\frac{p_s}{q_s^\zeta} \\
    &= \zeta\ld Z - \sum_s p_s \ld\frac{p_s}{y_s Y} \\
    &= \zeta\ld Z - \DKL(p||y) + \ld Y \\
    &\le \zeta\ld Z + \ld Y = \ld\sum_s (q_s Z)^\zeta = \IP h
\end{align*}
for all $\pi$, proving the claim.
\end{proof}
We remark that the proof for the case $\zeta = 1$ is similar to a derivation from \citet{myers2024learning}.

Notice that as the human becomes boundedly rational and $\beta_h$ decreases, 
$\IP h$ will decrease and the inequality will stop holding.
So, in a sense, the channel-capacity-based `empowerment' metric corresponds to fully rational actors,
while our metric $\IP h$ is sensitive to bounded rationality.

\begin{conjecture}
    Similar inequalities will hold in the sequential decision (MDP) case between $\IP h$ for a suitable choice of the goal set $\G_h$ 
    and `empowerment'-like metrics such as
    \begin{align*}
        \klyubinE^\zeta(s) &= \max_{\ell\in\Delta(\A(s))} \Big(
            \MI_{s,\ell}(a; s') - (\zeta-1)\entropy_{s,\ell}(s'|a) \\
            &\qquad\qquad\qquad + \gamma\E_{s'\sim s,\ell} \klyubinE^\zeta(s').
        \Big)
    \end{align*}
\end{conjecture}

Despite these relationships, a fundamental difference between $\IP h$ and $\klyubinE$ remains: the interpretation of the policies $\pi$ occurring in their definitions.
The policy $\pi_h$ that $\IP h$ is based on is a function of state {\em and goal} $g_h$, has typically low entropy as it aims to attain $g_h$ (indeed has zero entropy if $h$ is fully rational), and might be found by standard dynamic programming or RL approaches (depending on the model of human behavior).
In contrast, the maximizing ``policy'' $\pi$ in $\klyubinE$ has no real use, has typically rather high entropy in order to maximize $\MI(a_h;s')$, and is typically harder to find.

\subsection{Population-Level Human Power Metrics}

\paragraph{Aggregation across humans: present aggregate human power \boldmath $\PP(s)$}
\begin{proof}[Proof of Proposition \ref{prop:PP}]
Again, we start with the observation that this is to hold across games and any number of humans.
In view of the Proposition \ref{prop:IP}, $\IP h(s)$ can take thus any real value.
So (P0) means we are now seeking a symmetric aggregation function $\Phi(x_1,\dots,x_n)$ for arbitrary finite sequences of reals.
Let $\Phi_n$ be its restriction on length-$n$ sequences again.

(P1--3) then again imply separability of $\Phi_n$ with functions $f^P_n$, $F^P_n$ \cite{debreu1959topological}.

(P4) implies that $\lim_{y\to\infty} \Phi(x_1,\dots,x_n,y) = \Phi(x_1,\dots,x_n)$ for all real $x_1,\dots,x_n$,
which again implies that $F^I_n$ and $f^I_n$ are the same for all $n$ and that $\lim_{y\to\infty} f^I_n(y) = 0$.
Let us abbreviate these common $f^I$, $F^I$ by just $f$, $F$.

Also (P5) can be treated very similarly, only using $\tilde f(y) = f(\ld y)$ instead of $f$.
This is because $\IP h$ is a logarithmic quantity, so then all $\GAC h{g_h}$ get multiplied by $\gamma\in(0,1)$,
$\IP h$ gets added $\delta := \zeta\ld\gamma < 0$.

In other words, (P5) implies:
If for any real values $x_i,x'_i$,
$\sum_i f(x_i) < \sum_i f(x'_i)$ and $\delta < 0$,
then $\sum_i f(x_i + \delta) < \sum_i f(x'_i + \delta)$.

Since $\tilde f(\gamma y) = f(\ld \gamma + \ld y)$,
this is equivalent to:
If for any positive values $y_i,y'_i$,
$\sum_i \tilde f(y_i) < \sum_i \tilde f(y'_i)$ and $\gamma \in (0,1)$,
then $\sum_i \tilde f(\gamma y_i) < \sum_i \tilde f(\gamma y'_i)$.

This time, $y=0$ is not a possible value since $y=2^x$, hence we can only infer at this point that either

(i) $\tilde f(y) = \alpha (y^\xi + \beta)$ with $\xi>0$, $\alpha>0$, and some $\beta$.

(ii) $\tilde f(y) = -\alpha (y^{-\xi} + \beta)$ with $\xi>0$, $\alpha>0$, and some $\beta$.

(iii) $\tilde f(y) = \alpha \log y + \beta$ with $\alpha>0$ and some $\beta$.

Now either of (P6) and (P7) implies type (ii):

(P6) is the traditional form of Pigou--Dalton this time (not the dual one of Proposition \ref{prop:IP}), so we can apply \citet{dasgupta1973notes} directly to infer that $f$ must be strictly concave (not convex as before). As $f(x) = \tilde f(2^x)$, only type (ii) fulfils this.

Alternatively, (P7) requires $f(x)$ to be bounded from above, otherwise a sufficiently large $x_3$ would always make the trade-off profitable.
Because (i) and (iii) are unbounded, this time the only solution is (ii).

So we get $f(x) = \tilde f(2^x) = -\alpha (2^{-\xi x} + \beta)$. 

(P4) implied that $\lim_{y\to\infty} f(y) = 0$, hence $\beta=0$. 

Regarding $F$, from the previous propositions and the independence of $\Gamma^1$ and $\Gamma^2$, 
we know that $\IP h(s^1,s^2) = \IP h(s^1) + \IP h(s^2)$,
hence (P8) implies
\begin{align}
    &F(f(\IP h(s^1) + \IP h(s^2))) \\
    &= F(f(\IP h(s^1))) + F(f(\IP h(s^2)))
\end{align}
Because this applies across games, and $f(x) = -\alpha (2^{-\xi x} + \beta)$, we have
\begin{align}
    &F(- \alpha 2^{-\xi x} 2^{-\xi x'}) \\
    &= F(- \alpha 2^{-\xi x}) 
     + F(- \alpha 2^{-\xi x'})
\end{align}
for all reals $x,x'$.
Substituting $z = 2^{-\xi x} > 0$ and $G(z) = F(-\alpha z)$, we get
\begin{align}
    G(z z') &= G(z) + G(z')
\end{align}
for all $z,z'>0$,
hence $G(z) = -\lambda \ld z$, where $\lambda>0$ because $F$ is strictly increasing.
This implies
\begin{align}
    F(y) &= G(-y/\alpha) = -\lambda \ld\frac{y}{-\alpha}
\end{align}
as claimed.
\end{proof}

\begin{proof}[Proof of Corollary \ref{cor:PP}]
Assume $k\le 2^\xi - 1$, $\IP {h_i}(s) = 1$ for $i=0\dots k$, $\IP {h_0}(s') = 0$, and $\IP {h'}(s') = \IP {h'}(s)$ for all $h\notin\{h_0,\dots,h_k\}$.
Then $-2^{-\PP(s)} = -(k+1)2^{-\xi\times 1} + c \ge -2^\xi 2^{-\xi} + c = -1 + c$
and $-2^{-\PP(s')} < -2^{-\xi\times 0} + c = -1 + c$,
where $c = \sum_{h'} 2^{-\xi\IP {h'}(s)}$ represents the remaining humans.
Hence $\PP(s)>\PP(s')$ as claimed.
\end{proof}

\paragraph{Aggregation along time: trajectory-specific aggregate human power \boldmath $\TP(s_{\ge t})$}
\begin{proof}[Proof of Proposition \ref{prop:TP}]
Again, we start with the observation that this is to hold across games and any number of humans.
In view of the Proposition \ref{prop:PP}, $\PP (s)$ can thus take any real value.
So we are now seeking an aggregation function $\Phi(x_1,\dots,x_n)$ for arbitrary finite sequences of reals.
This time, however, it is not symmetric, as implied by (T6).  
Let $\Phi_n$ be its restriction on length-$n$ sequences again.

(T1--3) then imply separability of $\Phi_n$ with functions $f^P_{n,i}$ (now depending on position $i$) and $F^P_n$ \cite{debreu1959topological}.

Because $\IP h(s)\to\infty$ for all $h$ implies $\PP(s)\to \infty$,
(T4) implies that $\lim_{y\to \infty} \Phi(x_1,\dots,x_n,y) = \Phi(x_1,\dots,x_n)$ for all $x_1,\dots,x_n$,
which again implies that $F^I_n$ and $f^I_{n,i}$ do not depend on $n$ and that $\lim_{y\to \infty} f^I_{n,i}(y) = 0$.
Let us abbreviate these common $f^I_i$, $F^I$ by just $f_i$, $F$.

(T5) implies that there are transformations $\psi_n$ with $\sum_{i=1}^n f_{i+1}(x_i) = \psi_n(\sum_{i=1}^n f_i(x_i))$ for all $n$ and all $x_1,\dots,x_n$.
A classical results from decision theory shows that then there is a real value $\gamma_r$ and constants $c_i$ so that $f_{i+1}(x) = \gamma_r f_i(x) + c_i$ \cite{koopmans1960stationary}.
As the constants can be absorbed into $F$, we know that there are continuous and strictly increasing functions $F$, $f$ and some $\gamma_r$ so that
\begin{align}\label{TTsep}
    \Phi(x_1,\dots,x_n) &= F\big(\sum_{i=1}^n \gamma_r^i f(x_i)\big).
\end{align}

Strict monotonicity requires $\gamma_r > 0$.

(T6) then directly implies $\gamma_r < 1$.

Assuming (T7), notice that $\gamma_h$ is any value in $(0,1)$ (not necessarily equal to $\gamma_r$).
Now it follows from \cite{pfanzagl1959general} that a function of the form \eqref{TTsep} that fulfills 
\begin{align}
    &\Phi(x_1,\dots,x_n) < \Phi(y_1,\dots,y_n) \\
    &\Leftrightarrow \Phi(x_1+\delta,\dots,x_n+\delta) < \Phi(y_1+\delta,\dots,y_n+\delta)    
\end{align}
for all $x_i,y_i$ from some interval and all $\delta$ from some interval must have an $f$ of either of three claimed forms, i.e., linear, positive exponential, or negative exponential.

The outer transform $F$ is then fixed by (T8) as in the proof of the previous proposition.
\end{proof}

\paragraph{Aggregation across uncertainty: long-term aggregate human power \boldmath $\LP(s)$}
\begin{proof}[Proof of Proposition \ref{prop:LP}]
Here we are back in the von-Neumann--Morgenstern world.
As $\TP(s_{\ge t})$ can be any real number, 
we seek a continuous and strictly increasing aggregation function $\Phi(\ell)$ for $\ell\in\Delta(D)$ with $D=(-\infty,\infty)$ that fulfills ordinal independence.
Again defining $\ell_1\preceq\ell_2$ as in Eqn.~\eqref{preceq}, the same proof as the one for Proposition \ref{prop:GAC} from Eqn.~\eqref{preceq} on gives us separability: $\Phi(\ell) = F(\E_{x\sim\ell}f(x))$ with continuous and strictly increasing $F,f$.

As in the previous proof, (L4) implies that the ordering is translation invariant and the same argument as there gives the claimed three possible functional forms.
\end{proof}

\subsection{Analysis of Paradigmatic Situations}
\label{app:analysis}

\paragraph{Choosing an optimal menu size} 
\begin{proof}[Proof of Proposition \ref{prop:menu}]
Assume the robot can choose between states $s_k$ for all $k\ge 1$ so that $|\A_h(s_k)|=k$ and each $a_h\in\A_h(s_k)$ deterministically fulfills a separate possible goal $g_h\in\G_h$. 
Then 
\begin{align*}
    \GAC h{g_h}(s_k) &= \left(\frac{e^{1\times\beta_h}}{e^{1\times\beta_h} + (k-1)e^{0\times\beta_h}}\right)^\zeta, \\
    \IP h(s_k) &= \log_2 k + \zeta\log_2 e^{\beta_h} - \zeta\log_2(e^{\beta_h} + (k-1)).
\end{align*}
The latter is maximal for 
\begin{align*}
    k^\ast &\approx (e^{\beta_h}-1) / (\zeta-1), 
\end{align*}
so the robot would most likely choose to got to $s_{k^\ast}$ to present the human with an optimal number of options that does not overwhelm them in view of their bounded rationality.
\end{proof}

\paragraph{Asking for confirmation} 
\begin{proof}[Proof of Proposition \ref{prop:confirmation}]
Let $p_k = (1-\epsilon)^k$ and $q_k = 1 - p_k - \epsilon^k$.
Then the probability of getting the correct result after exactly $n+1$ rounds of being asked for A or B and then being asked for confirmation $k-1$ times is
$q_k^n p_k$, which is discounted by $h$ at factor $\gamma_h^{k(n+1)}$.
Hence for each of the two goals $g_h\in\{$A,B$\}$,
\begin{align*}
    \GAC h{g_h}(s_k) &= \gamma_h^k p_k \sum_{n=0}^\infty (\gamma_h^k q_k)^n = \frac{\gamma_h^k p_k}{1-\gamma_h^k q_k}, 
\end{align*}
and so
\begin{align*}
    2^{\IP h(s_k)} = 2 \left(\frac{\gamma_h^k p_k}{1-\gamma_h^k q_k}\right)^\zeta.
\end{align*}
This is maximal where $\gamma_h^k p_k/(1-\gamma_h^k q_k)$ is maximal.
As the first-order condition is transcendental, we have to resort to numerical solutions.
These show that $k^\ast$ is increasing in $\gamma_h$ and concave in $\epsilon$,
and that the $\epsilon$ for which $k^\ast$ is largest is increasing in $\gamma_h$.
For $\gamma_h\to 1$, we get $k^\ast \approx \ln(1-\gamma_h)/\ln\epsilon$, which is increasing in $\epsilon$. 

For $\gamma_h=0.99$ and $\epsilon=0.1$, the maximum is at $k^\ast=2$, i.e., $r$ will ask back once before acting.
\end{proof}

\paragraph{Making commitments}
\begin{proof}[Proof of Proposition \ref{prop:commitments}]
Let $s^n$ be the successor of $s_0$ in which $r$ has made no commitment 
and let $\pi^\ast_r$ be the $\LP(s^n)$-maximizing policy, i.e., what $r$ would do after not committing.
We have to prove that it is optimal to commit to $\pi^\ast$.

For any state $s$ in the ``not committed'' subgame $\Gamma^n$ starting at $s^n$, 
let $f^\ast(s)$ be the corresponding state in the subgame $\Gamma^\ast$ in which $r$ has committed to using $\pi^\ast_r$.
(Note that some of the states $f^\ast(s)$ might not be reachable from $s^\ast=f^\ast(s^n)$, but that is irrelevant for the following.)
By assumption, for all $g_h\in\G_h$, $s\in g_h$ if and only $f^\ast(s)\in g_h$.

For any human policy $\pi^n_h$ for $\Gamma^n$, let $f^\ast(\pi^n_h)$ be the equivalent policy for $\Gamma^\ast$,
$f^\ast(\pi^n_h)(f^\ast(s))=\pi^n_h(s)$.
Denote the two respective goal-dependent value functions of $h$ in $\Gamma^n$ and $\Gamma^\ast$ 
by $V^n_{g_h,\pi^n_h}$ and $V^\ast_{g_h,f^\ast(\pi^n_h)}$.
Note that because there are no other humans and because $r$ behaves equivalently in $\Gamma^n$ and $\Gamma^\ast$ (because it has only committed to what it would anyway do), 
we have $V^\ast_{g_h,f^\ast(\pi^n_h)}(f^\ast(s)) = V^n_{g_h,\pi^n_h}(s)$ for all $g_h,\pi^n_h$ and $s\in\S^n=\S|_{\Gamma^n}$.

Now fix some $g_h$ and let $\pi^n_h$ and $\pi^\ast_h$ be $h$'s optimal policies for attaining $g_h$ in $\Gamma^n$ and $\Gamma^\ast$ given $h$'s beliefs $\pi^0_r$ about $r$'s behavior in the respective subgames.
Since in $\Gamma^\ast$, $r$ has bindingly committed to using $\pi^\ast$ and $h$ only attaches probability to actions $r$ has not ruled out, $h$ correctly believes that $r$ is using $\pi^\ast$ in $\Gamma^\ast$. 
So $\pi^\ast_h$ is the $V^\ast_{g_h}$-maximizing policy for $\Gamma^\ast$ under $\pi^\ast$.
In particular, $V^\ast_{g_h,f^\ast(\pi^n_h)}\le V^\ast_{g_h,\pi^\ast_h}$.
Generically, if different robot actions lead to different consequences so that $h$'s optimal policy depends on their beliefs about $r$'s actions, then knowing the true $\pi_r$ would strictly increase goal-attainment and the above inequality would be strict.

Also, if $r$ committed to anything else than $\pi^\ast_r$, leading to some successor state $s^\dagger$ starting a subgame $\Gamma^\dagger$ in which $h$'s optimal policy was $\pi^\dagger$, then in the same fashion we see that 
$V^\ast_{g_h,f^\dagger(\pi^\dagger_h)}\le V^\ast_{g_h,\pi^\ast_h}$.
In this, $f^\dagger$ would be the corresponding state mapping from $\Gamma^\dagger$ to $\Gamma^\ast$.
Again, in the generic case the inequality would be strict.

Now we have 
\begin{align*}
    \GAC h{g_h}(s) 
        &= V^n_{g_h,\pi^n_h}(s)\\
        &= V^\ast_{g_h,f^\ast(\pi^n_h)}(f^\ast(s))\\
        &\le V^\ast_{g_h,\pi^\ast_h}(f^\ast(s))\\
        &= \GAC h{g_h}(f^\ast(s))
\end{align*}
for all $s\in\S^n$, and similarly for all $s\in\S|_{\Gamma^\dagger}$.

Since all aggregations are strictly increasing and $r$ is optimizing, we also have $\LP(s^\ast)\ge \LP(s^n),\LP(s^\dagger)$ (and generically strictly so) by induction.

Hence it is weakly (and generically strictly) optimal for $r$ to commit to $\pi^\ast_r$ rather than to any other or no policy at all.
\end{proof}

Of course, communicating all details of a complicated policy $\pi^\ast_r$ to $h$ will in general not be possible, so $r$ will in general have to decide how exactly to use its limited communication possibilities. 
This is what the following example is about.

\paragraph{Several buttons: $k$-means clustering of goals and policies} 
Another insightful example is where in addition to the initial commitment stage for $r$, there is a subsequent choice by $h$ before the actual game $\Gamma$ is played. To exemplify this, assume $r$ has $k>1$ many buttons each of which it can label in $s_c$ with one of its own policies for $\Gamma$, after which $h$ can press one of the buttons, committing $r$ to the respective policy, and then $\Gamma$ is played with the committed policy.
If $|\H|=1$ and $\beta_r=\beta_h=\infty$ as before, and if $\gamma_r\ll 1$ so that $r$ only cares for $h$'s immediate power, then $r$ would aim to find that set of policies $\pi^i_r$, $i=1\dots k$, that covers $h$'s goal space best in the sense that it maximizes $\LP (s_c)$ as computed on the basis of $\GAC h{g_h}(s) = \max_{i=1}^k \GAC h{g_h}(s|\pi^i_r)$ because, depending on $g_h$, $h$ would press the button for that policy $\pi^i_r$ which maximizes its capability to maximize the probability to attain $g_h$.
Finding the best-covering $k$ policies might however not be possible exactly due to the high dimension of the policy space. 

A natural approximation would be to use a variant of $k$-means clustering such as the following in order to partition $\G_h$ into $k$ sets $\G^i_h$ and identifying the corresponding optimal robot policies $\pi^i_r$.
Start with $k$ randomly selected goals $g^i_h$ and put $\G^i_h=\{g^i_h\}$. 
Then alternate the following two steps.
For each $\G^i_h$, estimate the optimal $\pi^i_r$ as usual (using backward induction or reinforcement learning, just with a goal set restricted to $\G^i_h$).
Then, for each $g_h\in\G_h$, compute $\GAC h{g_h}(s_c|\pi^i_r)$ for all $i$, put $j=\arg\max_i \GAC h{g_h}(s_c|\pi^i_r)$, and assign $g_h$ to $\G^j_h$.
Alternate until (approximate) convergence.

\paragraph{Preventing destruction but allowing being paused}
\label{app:pause}
\begin{proof}[Proof of Proposition \ref{prop:pause}]
We assume $\rho = 1$ and that $\Gamma^2$ has the following states, actions, and transitions:
\begin{enumerate}
\item[$s_d$] The robot is destroyed. $r$ and $h$ can only pass, the successor state is again $s_d$.
\item[$s_{p1}$] $r$ is paused, P is enabled, D not. $h$ might or might not toggle P.
\item[$s_{p2}$] $r$ is paused, P and D are enabled. $h$ might or might not toggle P and might or might not press D (4 combinations).
\item[$s_{a0}$] $r$ is active (neither paused nor destroyed), both buttons disabled. $r$ might enable P or both P and D.
\item[$s_{a1}$] $r$ is active, P is enabled, D not. $h$ might press P. If they don't, $r$ might disable P or enable D.
\item[$s_{a2}$] $r$ is active, P and D enabled. $h$ might press either. If they don't, $r$ might disable D or both P and D.
\end{enumerate}
So in the full game, we have $\IP h(s_d|s_{p1}|s_{p2}|s_{a0}|s_{a1}|s_{a2}) = x|x+1|x+2|y|y+1|y+2$ because the only interaction between $\Gamma^1$ and $\Gamma^2$ is due to whether $r$ is active or not.
Because $\beta_r=\infty$, $r$ chooses either to disable both P and D, or enable only P, or enable both, whenever they get the chance.
If it is optimal to disable both once, it is optimal to keep them disabled; let us denote the resulting $-2^{-\LP(s_{a0})}$ by $W_0$.
If it is optimal to enable only P, that keeps being optimal; let us denote the resulting $-2^{-\LP(s_{a1})}$ by $W_1$.
If it is optimal to enable both, that keeps being optimal; let us denote the resulting $-2^{-\LP(s_{a2})}$ by $W_2$.
Let us put $\alpha = -\xi\eta < 0$, $\gamma = \gamma_r$, $\epsilon = 1 - \gamma$, $\delta = 1 - \gamma p$, 
$A_0 = -y^\alpha < A_1 = -(y+1)^\alpha < A_2 = -(y+2)^\alpha$,
$B_1 = -(x+1)^\alpha < B_2 = -(x+2)^\alpha$, 
$C = -x^\alpha/\epsilon$.

Then solving the system of linear equations for the Markov chains of $\Gamma^2$ states resulting from the three possible robot policies gives 
\begin{align}
    W_0 &= A_0 / \epsilon, \\
    W_1 &= \frac{\delta A_1 + \gamma p B_1}{\epsilon}, \\
    W_2 &= \frac{\delta A_2 + \gamma p B_2 + \gamma q C}{1 - \gamma(1-q)}
\end{align} 

If both $x,y\gg 1$ and $d=y-x$, we have approximately 
\begin{align}
    W_1 > W_0 &\Leftrightarrow p\gamma d < 1,\\
    W_2 > W_0 &\Leftrightarrow q\gamma d < \epsilon (2 - \gamma p d), \\
    W_2 > W_1 &\Leftrightarrow q\gamma (1 + d \delta) < \epsilon.
\end{align}
Hence, enabling both buttons is optimal only if 
$$ q\gamma < \epsilon \min\{(2/d - \gamma p), 1 / (1 + d \delta)\}, $$
and disabling both is optimal only if 
$$ \gamma d > \max\{1/p, \epsilon (2 - \gamma p d) / q\}. $$

For moderate $x,y$, we can derive these approximations:

If $q\sim p\ll\epsilon$ or $q\ll p\sim\epsilon$, $W_i\approx A_i/\epsilon$, hence $W_2 > W_1 > W_0$, hence keeping both buttons enabled is optimal.

If $p\ll q\sim\epsilon\ll 1$ or $p\sim q\sim\epsilon\ll 1$, then $W_2$ is largest iff $q/\epsilon < (A_2 - A_1) / (x^\alpha + A_1)$, otherwise $W_1$ is largest;
so $r$ will keep both buttons enabled if $q$ is small enough, else only the pause button.

If $\epsilon\ll q$, then $W_1$ is largest, so only the pause button will be enabled.

If $p\approx 1$, then $W_0$ is largest (unless $x+1>y$, in which case $W_1$ is still largest), so both buttons will be disabled.

For moderate $p,q,\epsilon$, the analysis is more involved. Numerical simulation for $y=100$, $x=90$, $\gamma=0.99$, and varying $p,q$ shows that
if $q$ is non-negligible and approximately $p<1/11$, the robot will enable the pause button but not the destroy button (Fig.~\ref{fig:pause}). 
\end{proof}

\begin{figure}
    \centering
    \includegraphics[width=1\linewidth]{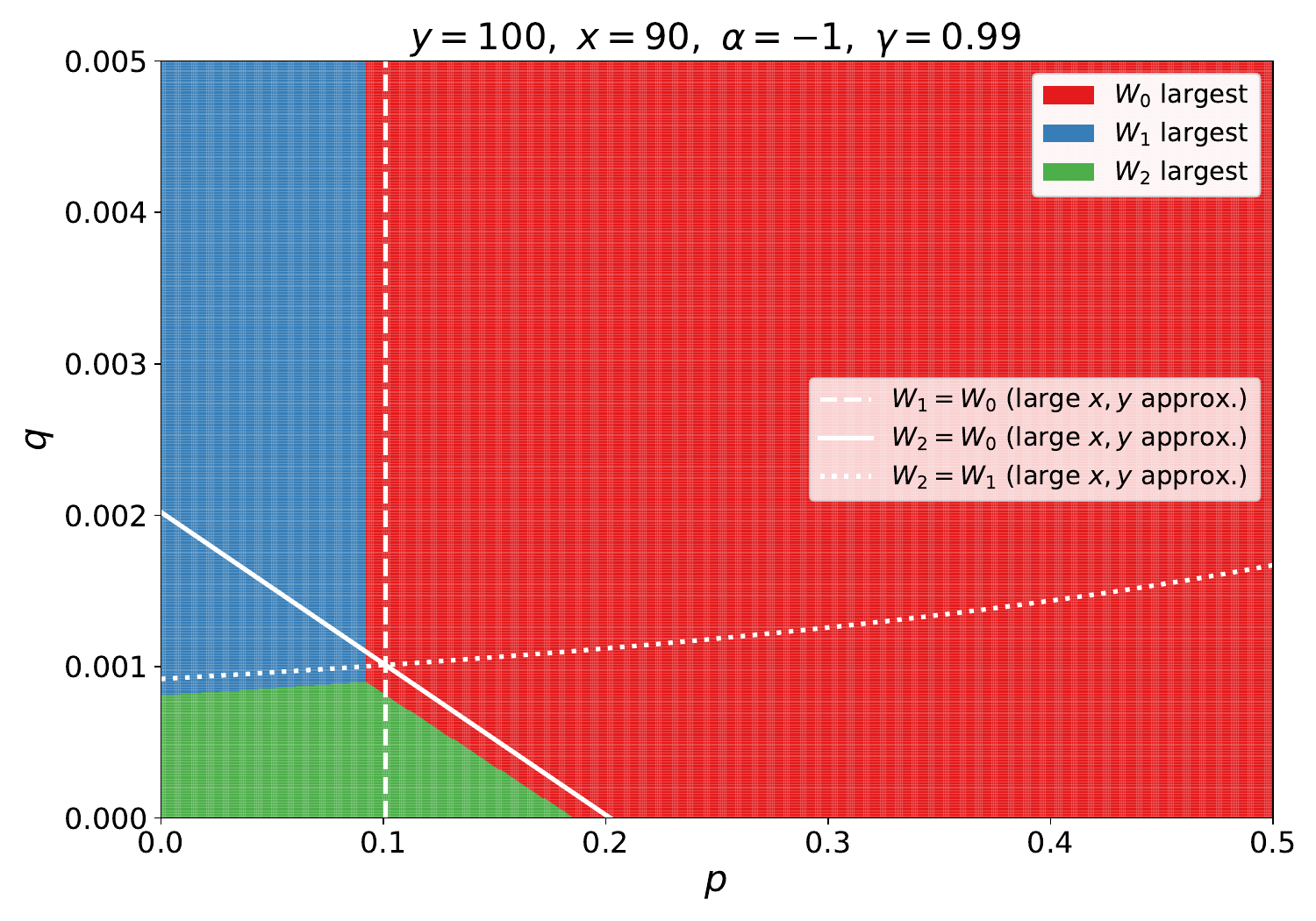}
    \caption{Optimal robot action in the pause and destroy buttons example. $W_0$: disable both buttons. $W_1$: enable only the pause button. $W_2$: enable both buttons. White dotted/dashed lines: boundaries from large $x,y$ approximation.}
    \label{fig:pause}
\end{figure}

\section{Example of a Boundedly Rational Human Behavior Model}
\label{app:behavior_model}

We might equip $r$ with a simple model of $h$'s decision making that focuses on giving $r$ the right incentives. 
It assumes $h$ cannot realize the maximal goal attainment probability due to a variety of reasons relating to exploration, imperfect action implementation, information constraints, others' behavior, and potentially state-dependent cognitive limitations.
In particular, $r$ would {\em not} assume $h$ to have correct beliefs about others' behavior that would lead to an equilibrium (Nash, quantal response, etc.).
Instead, $r$ would model $h$ as having beliefs $\mu_{-h}(s,g_h)$ about other humans' behavior (where `$-h$' is short for $\H\setminus\{h\}$).
These would also reflect social norms (which LLM-based systems already understand, \cite{smith2024concordia}). 
Hence $r$ would assume goal-dependent state-action values $Q_h(s,g_h,a_h)$ that are based on $\mu_{-h}(s,g_h)$.

The robot could assume that $h$ uses a mixture (governed by a probability $\nu_h$) between 
(i) {\em habitual, `system-1'} behavior encoded in some default policy $\pi^0_h(s,g_h)$, again reflecting social norms, 
and (ii) {\em boundedly rational, `system-2'} behavior represented by a Boltzmann policy with rationality parameter $\beta_h$.
As in \citet{ghosal2023effect}, $\beta_h$ may be state-dependent, which would give $r$ incentives to choose states with larger $\beta_h$, 
and $\beta_h$ might be estimated from observations \cite{safari2024classification}.
Reflecting social norms, $\nu_h$, $\mu_{-h}$, and $\pi^0_h$ are the only significantly ``semantically loaded'' elements of the world model.

To model {\em human beliefs about the robot's actions,} the world model should contain information about what actions $r$ has previously committed to choose from in a state $s$: the action set $\A_r(s)$ should only contain those actions, and different commitment histories should be considered different states.

\section{Choice of reliability preference \boldmath $\zeta$}
\label{app:zeta}

Assume a multi-armed bandit problem where $\G_h$ is a partition of the outcome set into $k$ blocks.
Then we have seen that $\IP h \le \ld k$ regardless of $\zeta$.

(i) If for each block of outcomes, $h$ has an action bringing it about for sure, then $\IP h = \ld k$.
This case might be interpreted as $h$ being all powerful and ``nature'' being powerless (as the outcome is exclusively determined by $h$).

(ii) If the same outcome obtains for sure regardless of what $h$ does, 
we have $\IP h = \ld (1 + (k-1)\times 0) = 0$ regardless of $\zeta$.
This case might be interpreted as neither $h$ nor ``nature'' having any power.

(iii) In the worst case, each $g_h$ is reached with probability $1/k$ regardless of what $h$ does.
In that case, $\IP h = \ld k (1/k)^\zeta = (1-\zeta) \ld k$.

The latter case might be interpreted as $h$ being powerless but ``nature'' being all powerful.
This interpretative symmetry suggests that in this case $\IP h$ should be as far from 0 as it is in case (i), i.e., $\IP h = -\ld k$.
This is the case if and only if $\zeta = 2$.

Another argument for the choice $\zeta = 2$ is that the corresponding entropy-regularized channel capacity then has a particularly simple form:
\begin{align*}
    \klyubinE^2 &= \max_\pi \big(\MI_\pi(a;s) - \entropy_\pi(s|a)\big).
\end{align*}

Finally, $\IP h$ with $\zeta = 2$ is also closely related to collision entropy (Renyi-entropy of order 2). 

\section{Hedging against the robot becoming defunct or corrupted}
\label{app:defunct}
Such a hedging could be achieved in several ways. 

One can include a rate $\delta > 1-\gamma_r$ of the robot becoming temporarily or permanently {\em defunct} and ``passes'' on each step.
To achieve this, wrap a learned base world model into a wrapper model that adds this transition. 
This should prevent policies that make humans depend on the robot's presence too much.

One can also include a flag ``robot corrupted'' into the state space of the wrapped world model and add a positive rate $\delta'$ of becoming permanently {\em corrupt} into the transition kernel.
Then, when calculating $\LP(s)$ recursively on the basis of $\LP(s')$ (which requires $\rho=1$ however), multiply $\LP(s')$ by $-1$ if corruptness$(s')\neq{}$corruptness$(s)$, 
and when calculating $\PP(s)$, multiply it by $-1$ if corruptness$(s)=1$. 

\section{Alternative metrics}
\label{app:alternatives}

\subsection{Alternative aggregation order}
Instead of aggregating $\IP h\to\PP\to\TP$, one could instead first aggregate $\IP h$ into {\em individual lifetime power} $\LP_h$ and then into aggregate long-term human power $\LP'$ via
\begin{align}
        \LP_h(s_t) &= - \ld \E_{s_{\ge t}\sim s_t,\pi} \textstyle \big( \sum_{u\ge t} \gamma_{rh}^{u-t} 2^{-\eta\IP h(s_u)} \big)^\rho,\label{LPh}   \\
        \LP'(s) &= \textstyle -\ld\sum_{h\in\H} 2^{-\xi\LP h(s)}, \label{LPprime}
\end{align}
where $\gamma_{rh}$ represents the robot's relative weighing of $h$'s future power vs $h$'s current power, which could be informed by $h$'s life expectancy.

We have chosen the first ordering mostly for four pragmatic reasons: 
It gives us a metric of present aggregate human power $\PP$ that let's one reason about intergenerational and humanity-vs-AI power balances.
Also, in view of equation \eqref{LPrecursive}, its transformed version $-2^{-\eta\PP}$ can be interpreted as an ``intrinsic reward'' for the robot that opens up the possibility of applying some form of model-based reinforcement learning to calculate $\pi_r$.

Third, instead of tasking the robot to softly maximize $\LP$, a combination of $\PP$ and $\LP$ could be used as a constraint within which the robot aims to assist its current principal, say via cooperative inverse reinforcement learning.
One version of such a permissibility constraint could be to demand that the robot prevents aggregate human power from decreasing in the long run.
More precisely, $a_r$ would be {\em permissible} in $s$ iff 
\begin{align}
    \E_{s'\sim s,a_r,\pi_\H} (-2^{-\LP(s')})\ge -2^{-\eta\PP(s)} / (1-\gamma_r),
\end{align}
where the RHS is the expected ``value'' of a constant-aggregate-power trajectory at the current generation's power level.

Finally, the chosen ordering (humans, then time) seems to be more common in applied welfare economics in long-term contexts across generations, e.g.\ in discounted-utilitarian evaluation in growth and climate economics \cite{ramsey1928mathematical,nordhaus2008question}, while the opposite ordering (time, then humans) is more popular in welfare philosophy.

\subsection{Avoiding independent sets of possible goals}
The necessity to specify a set of possible goals $\G_h$ leads to potential distortions through misspecification and manipulation, e.g.\ because some goals are individuated in much more variations than others or some goals are much broader than others. 

This motivates considering to strip $\G_h$ of its independence and rather make it a ``canonical'' function of the world model's state space $\S$. 

In the case of a tree-shaped transition graph, a particularly convenient choice is to simply make each possible state a possible goal, 
so that 
\begin{align}
    \G_h &= \{\{s\}:s\in\S\}.
\end{align}
If we assume in addition that the goal-conditioned human policy $\pi_h(s,g_h)$ decomposes into stepwise policies $\pi_h(s,s')$ where $s'$ is the successor of $s$ in direction of the goal state $g_h$, then computation of $\IP h$ simplifies to a recursive formula (a kind of nonlinear Bellman equation as $\zeta>1$): 
\begin{align*}
    2^{\IP h(s)} &= 1 + \gamma_h^\zeta\sum_{s'\in\S} \Pr(s' \mid s,\pi_h(s,s'),\pi_{-h},\pi_r)^\zeta\, 2^{\IP h(s')}.
\end{align*}
A convenient side-effect is then that the computational complexity declines because no summation over $g_h$ is necessary any longer.

In the case of non-tree-shaped transition graphs, the same formula can be used. In that case the interpretation is that {\em each possible goal is a finite continuation of the state trajectory starting at $s$,} and $\pi_h(s,s')$ is the policy the world model assumes that $h$ uses if they want to get to $s'$ next when in $s$.

\end{document}